%% file: neurips_2026.tex
\documentclass{article}

\PassOptionsToPackage{numbers, compress}{natbib}
\usepackage[preprint]{neurips_2026}

\usepackage[utf8]{inputenc} 
\usepackage[T1]{fontenc}    
\usepackage{hyperref}       
\usepackage{url}            
\usepackage{booktabs}       
\usepackage{amsfonts}       
\usepackage{nicefrac}       
\usepackage{microtype}      
\usepackage{xcolor}         

\usepackage{amsmath}
\usepackage{bm} 

\usepackage{amssymb}
\usepackage{amsthm}
\usepackage{graphicx}

\usepackage{multirow}
\usepackage{makecell}
\usepackage{enumitem}
\usepackage{pifont}
\usepackage{subcaption}
\usepackage{float}

\usepackage{xurl}
\usepackage{soul}
\usepackage[table]{xcolor}
\definecolor{lightorange}{RGB}{255,242,204}
\sethlcolor{lightorange}

\usepackage{algorithm}
\usepackage{algpseudocode}

\usepackage{wrapfig}

\newtheorem{theorem}{Theorem}
\newtheorem{definition}{Definition}
\newtheorem{assumption}{Assumption}

\title{Preventing Rank Collapse in Federated Low-Rank Adaptation with Client Heterogeneity}

\author{
Fei Wu, 
Jia Hu,
Geyong Min, 
Shiqiang Wang \\
Department of Computer Science, University of Exeter, UK \\
\texttt{\{fw407,j.hu,g.min,s.wang9\}@exeter.ac.uk} \\
}

\begin{document}

\maketitle

\begin{abstract}
Federated low-rank adaptation (FedLoRA) has facilitated communication-efficient and privacy-preserving fine-tuning of foundation models for downstream tasks. In practical federated learning scenarios, client heterogeneity in system resources and data distributions motivates the use of heterogeneous LoRA ranks across clients. However, we identify a previously overlooked phenomenon in heterogeneous FedLoRA with SVD-based allocation, termed \textit{rank collapse}, where the energy of the global update becomes concentrated in the minimum shared rank, resulting in suboptimal performance and high sensitivity to rank configurations. Through theoretical analysis, we reveal the root cause of rank collapse: a mismatch between rank-agnostic aggregation weights and rank-dependent client contributions, which systematically suppresses higher-rank updates at a geometric rate over rounds. Motivated by this insight, we propose \texttt{raFLoRA}, a rank-partitioned aggregation method that decomposes local updates into rank partitions and then aggregates each partition weighted by its effective client contributions. Extensive experiments across vision, language, and reasoning tasks show that \texttt{raFLoRA} prevents rank collapse, improves model performance, and enhances robustness across diverse heterogeneous configurations compared with strong FedLoRA baselines.
\end{abstract}

\section{Introduction}
\label{sec:intro}
\input{Sections/0-introduction}

\section{Related Work}
\label{sec:relwork}
\input{Sections/1-related_work}

\section{Preliminaries}
\label{sec:preli}
\input{Sections/2-preliminaries}

\section{Problem Analysis}
\label{sec:analysis}
\input{Sections/3-problem_analysis}

\section{raFLoRA Method}
\label{sec:method}
\input{Sections/4-methodology}

\section{Experiments}
\label{sec:exp}
\input{Sections/5-experiments}

\section{Conclusion}
We identify rank collapse in heterogeneous FedLoRA with SVD-based allocation and provide a theoretical analysis revealing that its root cause is a rank-wise averaging mismatch. To address this issue, we propose \texttt{raFLoRA}, a rank-partitioned aggregation strategy that aligns aggregation weights with rank-wise effective contributors. Extensive experiments across diverse tasks demonstrate its effectiveness and robustness, enabling efficient and scalable adaptation of large models.



{
\small
\bibliographystyle{IEEEtran}
\bibliography{neurips_2026}
}


\appendix
\newpage
\input{Sections/appendix}



\end{document}

%% file: Sections/0-introduction.tex

Pre-trained foundation models (FMs) have become the cornerstone of generative AI tasks across natural language processing (NLP) and computer vision (CV) domains \cite{vaswani2017transformer,grattafiori2024llama3,dosovitskiy2021vit}.
Fine-tuning these models has emerged as a standard paradigm for efficiently adapting them to diverse downstream tasks. 
However, high-quality public data are increasingly scarce, and privacy concerns over training on private data continue to grow. 
Federated learning (FL)~\cite{mcmahan2017communication} enables privacy-preserving collaboration to effectively leverage distributed private data for training high-quality models. Therefore, FL has been combined with fine-tuning of FMs in recent works \cite{hu2022lora, zhang2024fedit,chen2025rolora, guo2025fedsalora,sun2024ffalora, singhal2025fedexlora}.

However, the prohibitive communication overhead due to the scale of FMs has motivated the development of the federated low-rank adaptation (FedLoRA) framework~\cite{hu2022lora, zhang2024fedit}. 
Existing studies primarily focus on improving performance~\cite{chen2025rolora, guo2025fedsalora} or mitigating aggregation bias~\cite{sun2024ffalora, singhal2025fedexlora}.
Despite their effectiveness, they typically assume homogeneous LoRA ranks across clients.

In practice, clients in FL exhibit inherent heterogeneity in both system resources and data distributions.
Client resources vary widely in computational capability, memory size, and communication bandwidth, while client data are typically non-independent and non-identically distributed (non-IID) across domains. 
In this context, the LoRA rank inherently controls the trade-off between resource usage and adaptation capacity.
For example, in centralized fine-tuning of LLaMA-3.1-8B~\cite{grattafiori2024llama3} on MetaMathQA40K~\cite{yu2024metamathqa40k} and evaluation on GSM8K~\cite{cobbe2021gsm8k}, increasing the LoRA rank from 8 to 256 enlarges the update size from 13\,MB to 416\,MB, while improving accuracy from 70.3\% to 74.1\%. 
Consequently, heterogeneous LoRA ranks offer a principled way to accommodate client heterogeneity in FL.
Several works have explored FedLoRA with heterogeneous ranks by exploring aggregation and allocation across clients~\cite{cho2024hetlora, byun2025replication, wang2024flora, bai2024flexlora}. Among them, \texttt{FlexLoRA}~\cite{bai2024flexlora} leverages SVD-based allocation to avoid aggregation bias in \texttt{HetLoRA}~\cite{cho2024hetlora} caused by aggregating the $B$ and $A$ matrices separately, while maintaining communication efficiency. Figure~\ref{fig:fedlora} provides an overview of its framework.

\begin{figure}[t]
\vspace{-0.2in}
\begin{center}
\centerline{\includegraphics[width=0.7\columnwidth]{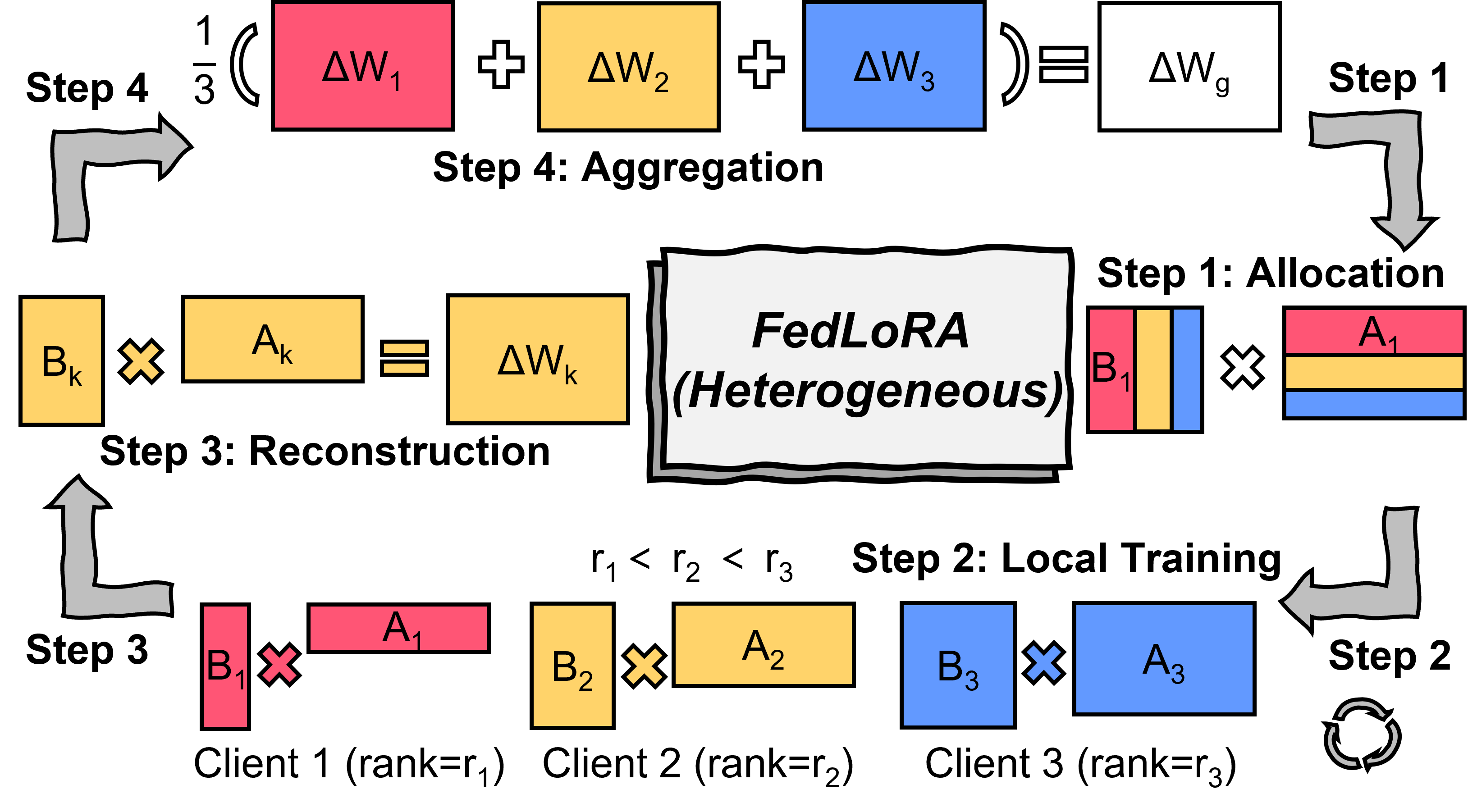}}
\caption{The global update is aggregated and allocated with different ranks in \texttt{FlexLoRA}~\cite{bai2024flexlora}.}
\label{fig:fedlora}
\end{center}
\vspace{-0.2in}
\end{figure}

\begin{wrapfigure}{r}{0.58\textwidth}
\centering
\vspace{-0.2in}
\begin{subfigure}{0.49\linewidth}
\centering
\includegraphics[width=\linewidth]{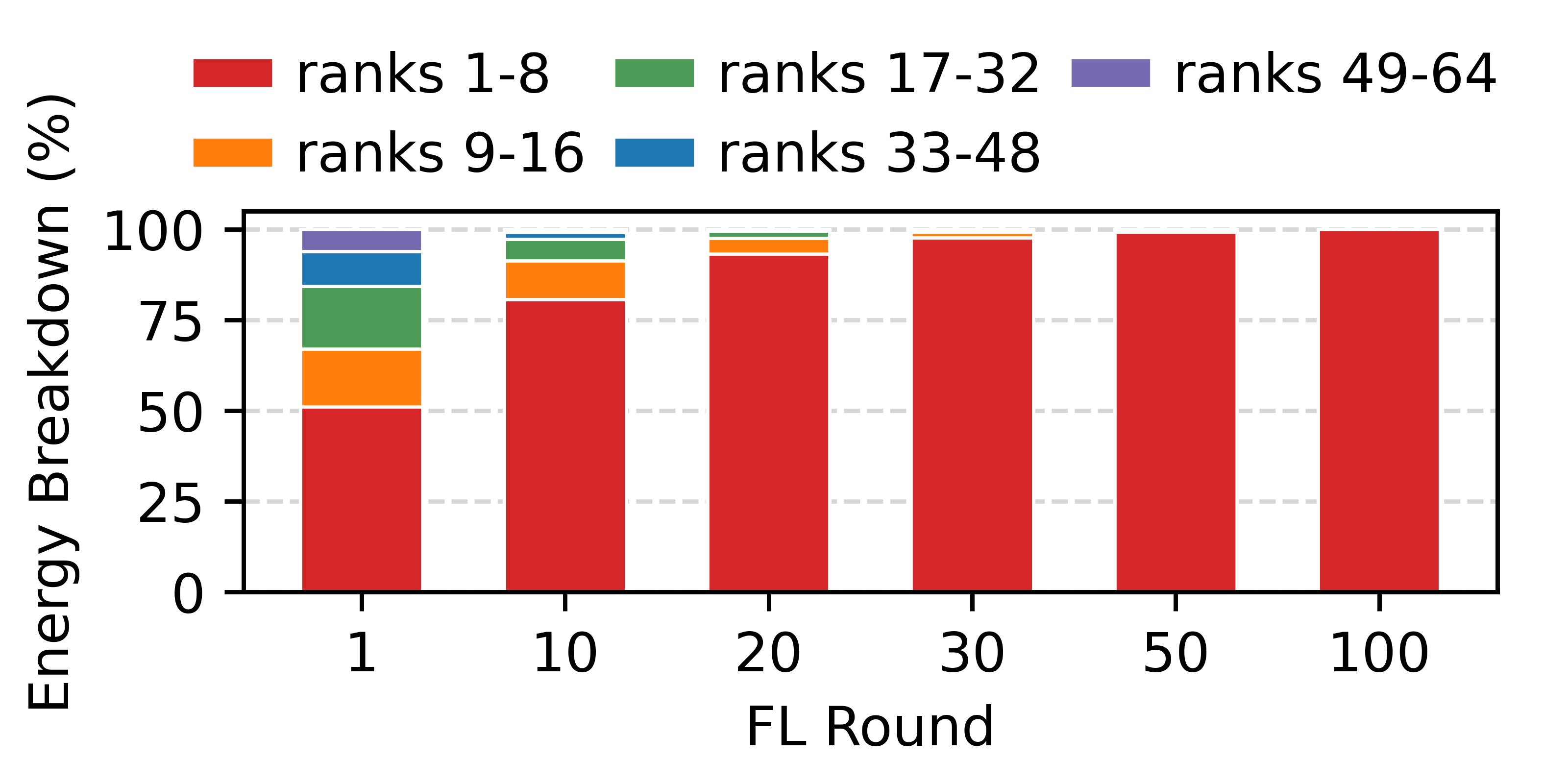}
\caption{\texttt{FlexLoRA}}
\label{fig:energy-flexlora}
\end{subfigure}
\hfill
\begin{subfigure}{0.49\linewidth}
\centering
\includegraphics[width=\linewidth]{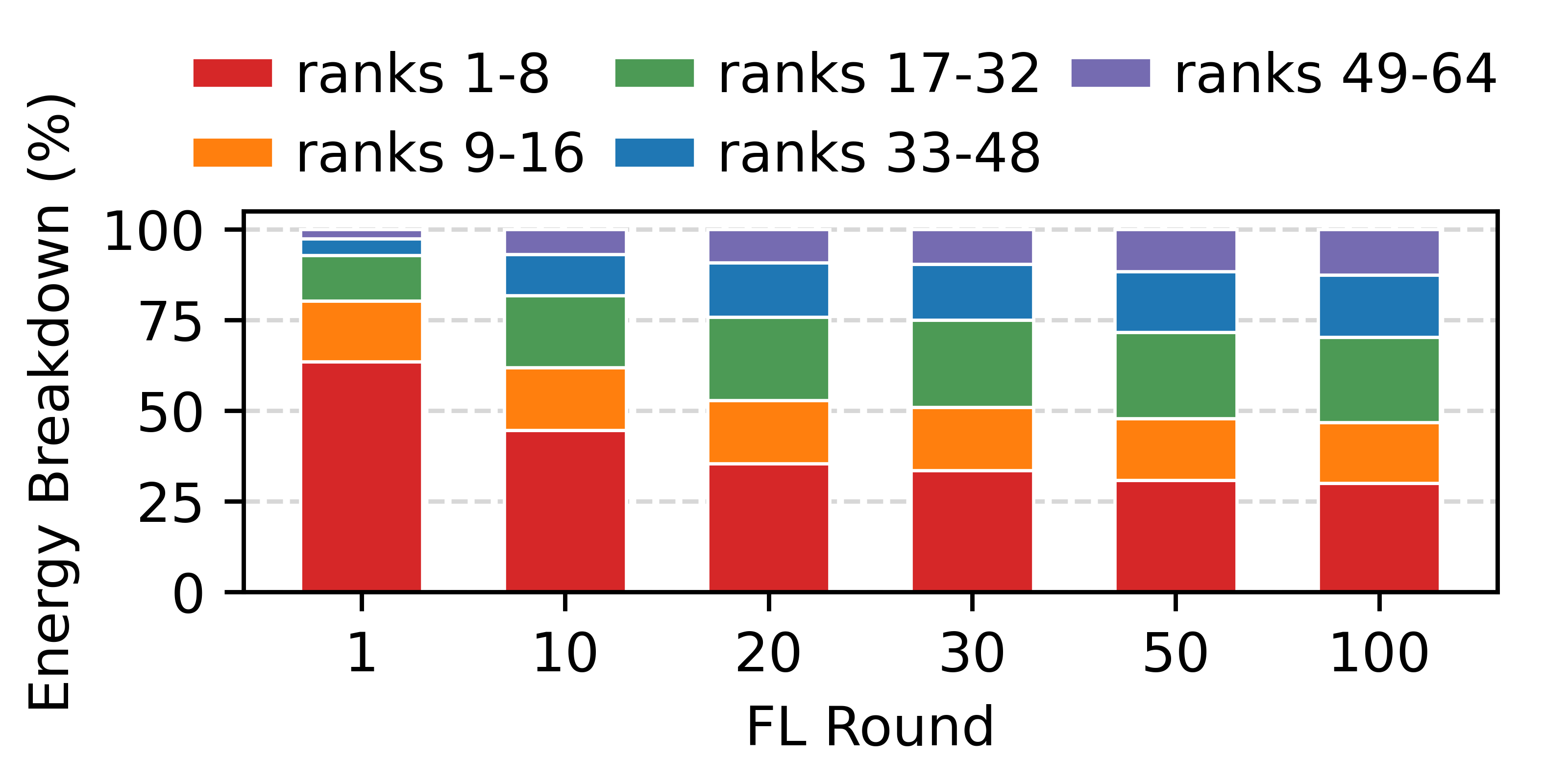}
\caption{\texttt{raFLoRA} (ours)}
\label{fig:energy-raflora}
\end{subfigure}
\begin{subfigure}{0.49\linewidth}
\centering
\includegraphics[width=\linewidth]{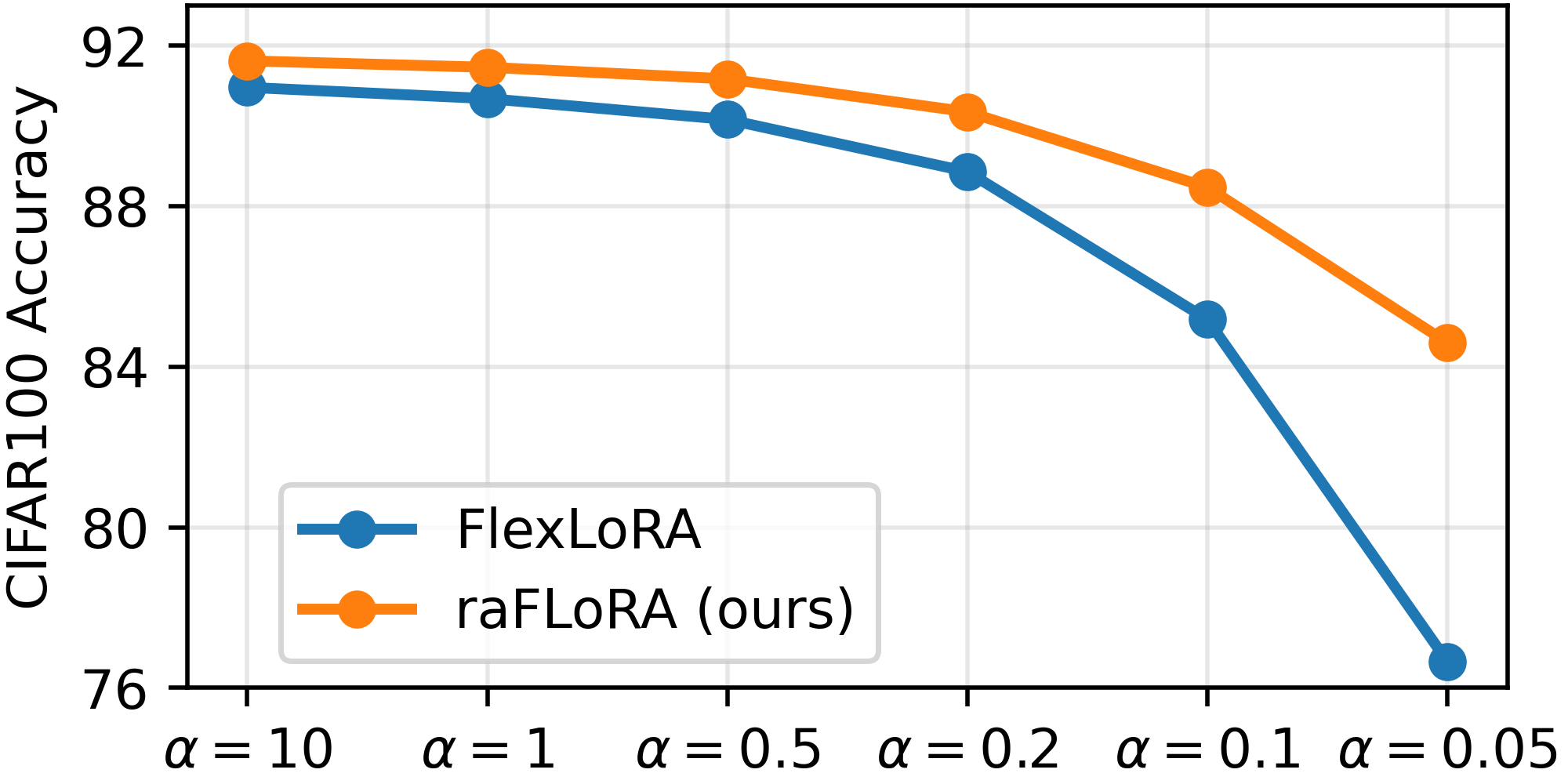}
\caption{non-IID settings}
\label{fig:acc-niid}
\end{subfigure}
\hfill
\begin{subfigure}{0.49\linewidth}
\centering
\includegraphics[width=\linewidth]{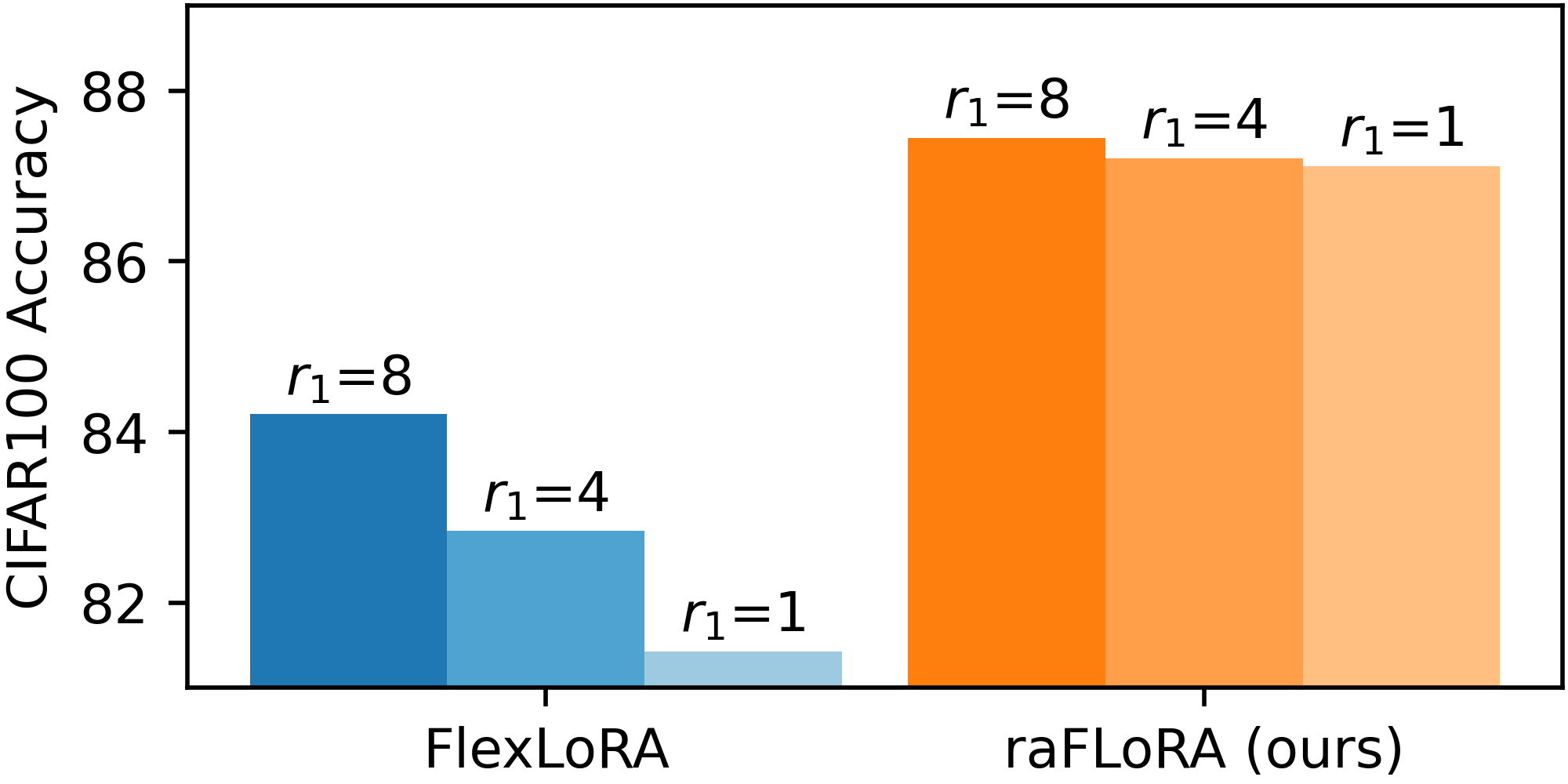}
\caption{Rank configurations}
\label{fig:acc-config}
\end{subfigure}
\caption{Energy breakdown of global update and accuracy under various settings.
In (a) and (b), the global update has an algebraic rank of 64. Client ranks are selected from $\{8,16,32,48,64\}$.
In (c), $\alpha$ controls the degree of data heterogeneity across clients.
In (d), only the minimal rank $r_1$ varies across different settings, with details in Appendix~\ref{appendix:hyperparameters}. }
\label{fig:pre-results}
\vspace{-0.2in}
\end{wrapfigure}

Nevertheless, our observation identifies a previously overlooked phenomenon in the state-of-the-art \texttt{FlexLoRA}~\cite{bai2024flexlora}: 
although higher-rank clients contribute more parameters during training, almost all of the energy%
\footnote{In this paper, the term “energy” refers to the squared singular values of a matrix, not physical energy consumption at devices.}
of the global update is captured by the minimal shared rank, as illustrated in Figure~\ref{fig:energy-flexlora}.
We term this behavior \textit{rank collapse}, where the collapse reflects the concentration of the energy spectrum, instead of a reduction in algebraic rank.
Consequently, the additional adaptation capacity introduced by higher-rank clients is not effectively translated into global model gains, leading to suboptimal performance across non-IID settings and high sensitivity to the shared rank, as shown in Figures~\ref{fig:acc-niid} and~\ref{fig:acc-config}.
Therefore, this study aims to address the following question:

\textit{How can we prevent rank collapse and improve the performance of heterogeneous-rank FedLoRA?}

Addressing this question poses two non-trivial challenges:
(i) Despite the empirical evidence, there remains a lack of principled understanding of why rank collapse arises under joint data and rank heterogeneity, where the aggregation and allocation processes across heterogeneous clients are tightly coupled.
(ii) Translating this understanding into a heterogeneous FedLoRA framework is challenging because preventing rank collapse may require the aggregation rule to account for how heterogeneous ranks shape client updates.
Together, these challenges call for a deeper understanding of rank collapse and a new rank-aware aggregation principle for heterogeneous-rank FedLoRA. 

To address these challenges, we conduct a rigorous theoretical analysis and reveal the root cause of rank collapse as a rank-wise averaging mismatch in heterogeneous-rank aggregation: the same aggregation weight is applied to all rank directions, while higher-rank directions (non-shared) are supported by only a subset of clients.
As a result, updates along higher-rank directions are systematically diluted under uniform averaging.
More importantly, this dilution accumulates across successive rounds, geometrically suppressing higher-rank contributions until they become negligible.

\begin{table}[t]
\vspace{-0.2in}
\caption{Comparison of federated low-rank adaptation methods. \(d\) and \(n\) denote the input and output dimensions, \(r\) denotes the LoRA rank, and \(M\) denotes the number of selected clients per round.}
\label{tab:intro-comparison}
\centering
\small
\begin{tabular}{lcccc}
\toprule
\textbf{Methods} 
& \makecell{\textbf{Heterogeneous} \\ \textbf{Rank Support}}
& \makecell{\textbf{Aggregation} \\ \textbf{Bias-Free}}
& \textbf{Communication Overhead}
& \makecell{\textbf{Rank-aware} \\ \textbf{Aggregation}} \\
\midrule
\texttt{FedIT}~\cite{zhang2024fedit}       
& \ding{55} & \ding{55} & $\mathcal{O}((d+n)r)$ & N/A \\
\texttt{HetLoRA}~\cite{cho2024hetlora}   
& \ding{51} & \ding{55} & $\mathcal{O}((d+n)r)$ & \ding{55} \\
\texttt{FLoRA}~\cite{wang2024flora}       
& \ding{51} & \ding{51} & $\mathcal{O}(\min(M(d+n)r, dn))$ & \ding{55} \\
\texttt{FlexLoRA}~\cite{bai2024flexlora}
& \ding{51} & \ding{51} & $\mathcal{O}((d+n)r)$ & \ding{55} \\
\textbf{\texttt{raFLoRA} (ours)}
& \ding{51} & \ding{51} & $\mathcal{O}((d+n)r)$ & \ding{51} \\
\bottomrule
\end{tabular}
\vspace{-0.1in}
\end{table}

Motivated by this insight, we propose \texttt{raFLoRA}, a rank-partitioned aggregation method for federated low-rank adaptation with client heterogeneity. Specifically, \texttt{raFLoRA} decomposes local updates into non-overlapping rank partitions and aggregates each part using weights determined by its rank-wise effective contributors.
By correcting the mismatch, \texttt{raFLoRA} eliminates the per-round dilution of higher-rank updates and thereby prevents rank collapse.
Table~\ref{tab:intro-comparison} summarizes the key differences between \texttt{raFLoRA} and baselines.
Figure~\ref{fig:energy-raflora} empirically shows \texttt{raFLoRA} reshapes the energy structure and prevents rank collapse, and Figures~\ref{fig:acc-niid} and \ref{fig:acc-config} further demonstrate its improved performance and enhanced robustness across diverse non-IID data distributions and various rank configurations.

We summarize the main contributions of this work below:
\begin{itemize}[leftmargin=*, itemsep=0pt, topsep=0pt]

\item We identify a previously overlooked phenomenon in SVD-based heterogeneous FedLoRA, termed \emph{rank collapse}, reveal its root cause as a mismatch between rank-agnostic aggregation weights and rank-dependent client contributions, and prove that the collapse proceeds at a geometric rate.

\item We propose \texttt{raFLoRA}, a novel rank-partitioned aggregation method that resolves the mismatch by aligning aggregation weights with rank-wise effective client contributions.

\item We empirically demonstrate that \texttt{raFLoRA} effectively prevents rank collapse, consistently improves global model performance, and enhances robustness across diverse tasks and non-IID data distributions, while maintaining communication efficiency.

\end{itemize}

%% file: Sections/1-related_work.tex

\textbf{Homogeneous FedLoRA.}
To reduce the communication cost of federated fine-tuning for FMs, \texttt{FedIT}~\cite{zhang2024fedit} integrates LoRA into local training.
Building on this framework, \texttt{RoLoRA}~\cite{chen2025rolora} improves convergence via alternating optimization, while \texttt{FedSA-LoRA}~\cite{guo2025fedsalora} decouples global and personalized LoRA components. 
To address data heterogeneity, \texttt{SLoRA}~\cite{babakniya2023slora} adopts a two-stage initialization strategy, and \texttt{FRLoRA}~\cite{yan2025frlora} increases effective update rank through residual-based updates. 
To mitigate aggregation bias, \texttt{FFA-LoRA}~\cite{sun2024ffalora} updates only the LoRA $B$ matrix, and \texttt{FedEx-LoRA}~\cite{singhal2025fedexlora} applies local bias correction. 
However, these methods are limited to homogeneous FedLoRA settings.

\textbf{Heterogeneous FedLoRA.}
In practical FL scenarios, client heterogeneity motivates heterogeneous FedLoRA ranks across clients.
Zero-padding-based \texttt{HetLoRA}~\cite{cho2024hetlora} and replication-based padding~\cite{byun2025replication} enable aggregation across heterogeneous ranks, but introduces aggregation bias.
To address this issue, \texttt{FLoRA}~\cite{wang2024flora} proposes a stacking-based aggregation scheme to achieve aggregation bias-free, at the cost of additional communication overhead and cold-start LoRA initialization.
\texttt{FlexLoRA}~\cite{bai2024flexlora} eliminates this communication overhead by aggregating reconstructed full-size parameters and reassigning rank-specific LoRA updates via SVD.
However, our theoretical analysis reveals that \texttt{FlexLoRA} suffers from rank collapse, leading to suboptimal performance and high sensitivity to the shared rank.
In contrast, the proposed \texttt{raFLoRA} prevents rank collapse through rank-partitioned aggregation, thereby improving performance and enhancing robustness in FedLoRA with client heterogeneity.

%% file: Sections/2-preliminaries.tex

In this section, we formalize heterogeneous FedLoRA with SVD-based allocation, using \texttt{FlexLoRA}~\cite{bai2024flexlora} as a representative framework that achieves aggregation bias-free while preserving communication efficiency.

Concretely, there are $K$ clients, each assigned a LoRA rank $r_k \in \{r_1, r_2, \dots, r_{\max}\}$, where the rank levels satisfy $r_1 < r_2 < \cdots < r_{\max}$. At round $t$, the server uniformly samples a subset $\mathcal{M}_t \subseteq \{1,\dots,K\}$ of $M$ clients at random.

We define rank coverage as the probability $ p_i = \mathbb{P}(r_k \ge i) = |\{k : r_k \ge i\}|/K $, where $i=1,\dots,r_{\max}$.
\begin{equation}
\label{eq:rank_coverage}
    p_1=\cdots=p_{r_1} = 1 > p_{r_1+1} \ge \cdots \ge p_{r_{\max}} > 0, 
\end{equation}
where higher ranks are supported by progressively fewer clients in heterogeneous-rank settings.

We consider a specific layer of the pre-trained weight matrix $W_{\text{pre}}\in\mathbb{R}^{d\times n}$. We model the incremental global update using a rank-$r_{\max}$ LoRA parameterization: $\Delta W_g^{(t)} \approx B_g^{(t)} A_g^{(t)}$ where the server maintains a maximal rank $r_{\max}$ with
$B_g^{(t)} \in \mathbb{R}^{d \times r_{\max}}$ and
$A_g^{(t)} \in \mathbb{R}^{r_{\max} \times n}$.

For each selected client $k \in \mathcal{M}_t$, the server broadcasts the corresponding rank-$r_k$ LoRA parameters obtained by truncation:
$\widetilde{B}_k^{(t)} = B_{g}^{(t)}[:,1:r_k]$ and 
$\widetilde{A}_k^{(t)} = A_{g}^{(t)}[1:r_k,:]$,
where $[:,1\!:\!r_k]$ denotes selecting all rows and the first $r_k$ columns, and
$[1\!:\!r_k,:]$ denotes selecting the first $r_k$ rows and all columns.
Client $k$ trains these parameters on its private data,
producing local updates $B_k^{(t)}$ and $A_k^{(t)}$.

The heterogeneous client updates are aggregated in the full $d\times n$ space using \texttt{FedAvg}~\cite{mcmahan2017communication}:
\begin{equation}
\label{eq:aggregation}
\Delta W_g^{(t+1)} 
= \frac{1}{M} \sum_{k\in\mathcal{M}_t} B_k^{(t)}A_k^{(t)}
= \frac{1}{M} \sum_{k\in\mathcal{M}_t} \Delta W_k^{(t)},
\end{equation}
assuming equal client samples for analytical simplicity.

The aggregated update $\Delta W_g^{(t+1)}$ is then decomposed to rank $r_{\max}$ via SVD~\cite{bai2024flexlora}:
\begin{equation}
\label{eq:svd}
\Delta W_g^{(t+1)}
\approx \sum_{i=1}^{r_{\max}}
\sigma_i^{(t+1)} u_i^{(t+1)} v_i^{(t+1)\!\top}.
\end{equation}
The global LoRA updates $B_g^{(t+1)}$ and $A_g^{(t+1)}$ are reconstructed from the SVD components.
\begin{equation}
\label{eq:reconstruction_ba}
\begin{aligned}
B_g^{(t+1)}
&= [\sigma_1^{(t+1)} u_1^{(t+1)},
\ldots,
\sigma_{r_{\max}}^{(t+1)} u_{r_{\max}}^{(t+1)}], \quad
A_g^{(t+1)}
&= [v_1^{(t+1)},
\ldots,
v_{r_{\max}}^{(t+1)}]^{\!\top}.
\end{aligned}
\end{equation}
which defines the global LoRA update for the next round.

To assess the effect of heterogeneous ranks on the global update, we quantify each singular direction by its expected \textit{energy} $e_i^{(t)} = \mathbb{E}[(\sigma_i^{(t)})^2]$. We define the cumulative energy up to rank-$r$ is $E_r^{(t)} = \sum_{i=1}^{r} e_i^{(t)}$ and the normalized \textit{energy ratio} is $\rho_r^{(t)} = E_r^{(t)} / E_{r_{\max}}^{(t)}$, which measures the fraction of total expected energy captured by the top-$r$ directions.

After SVD, the total energy of the global update is decomposed into rank-$1$ to rank-$r_{\max}$ components under a fixed ordering by descending singular values. We refer to the minimal rank $r_1$ as the \textit{shared rank}, with the remaining $r_{\max} - r_1$ components forming the \textit{higher ranks}. Accordingly, $\rho_{r_1}^{(t)}$ measures the energy fraction in the shared rank, while $1-\rho_{r_1}^{(t)}$ corresponds to that in the higher ranks.

%% file: Sections/3-problem_analysis.tex

Based on the above formulation, we define rank collapse and analyze its emergence in SVD-based heterogeneous FedLoRA, providing intuitive insights into the underlying mechanism.

\subsection{Definition of Rank Collapse}
According to our observations in Figure ~\ref{fig:energy-flexlora}, although the global update is decomposed into rank-$r_{\max}$ components, its singular-value energy increasingly concentrates on the shared rank $r_1$ over rounds. This observation motivates the following definition, where the formal definition of energy ratio $\rho_{r_1}^{(t)}$ for the shared rank is provided in Section~\ref{sec:preli}.

\begin{definition}[Rank Collapse]
\label{def:erc}
In SVD-based heterogeneous FedLoRA, we define \textit{rank collapse} to occur when $1 - \rho_{r_1}^{(t)}$ progressively diminishes and becomes negligible over training rounds.
\end{definition}

Under rank collapse, although the global update retains algebraic rank $r_{\max}$ after SVD, its effective rank no longer reflects the available higher-rank capacity. Instead, the learning dynamics are governed by the shared rank $r_1$, rendering higher-rank components progressively ineffective. This limits the expressiveness of the global model, resulting in suboptimal performance under non-IID data and strong sensitivity to the shared client rank, as illustrated in Figures~\ref{fig:acc-niid} and ~\ref{fig:acc-config}.

\subsection{Theoretical Analysis}
To expose the core mechanism underlying rank collapse, we analyze the formulation under the following assumptions. 

\begin{assumption}[Fixed Singular Basis]
\label{assump:fixed_basis}
We assume that the global update can be represented as $\Delta W_g^{(t)}=\sum_{i=1}^{r_{\max}}\sigma_i^{(t)}u_i v_i^\top$ where $\{u_i v_i^\top\}$ is a fixed singular basis. 
\end{assumption}

\begin{assumption}[Direction-preserving Updates]
\label{assump:client_updates}
We assume that client updates preserve the global singular directions, \textit{i.e.,}
$\Delta W_k^{(t)} = \sum_{i=1}^{r_k} \widetilde{\sigma}_{k,i}^{(t)} u_i v_i^{\!\top}$,
where $\widetilde{\sigma}_{k,i}^{(t)} = \beta\,\sigma_i^{(t)}$, with $\beta>0$ being a scalar.
\end{assumption}

These assumptions yield a minimal, tractable model with a closed-form recursion that isolates the core mechanism of rank collapse. We later relax these assumptions to account for basis drift.

Based on Assumptions~\ref{assump:fixed_basis}--\ref{assump:client_updates} and preliminaries in Section~\ref{sec:preli}, we obtain the following Theorem~\ref{thm:erc}, with the detailed proof provided in Appendix~\ref{appendix:proof-erc}.

\begin{theorem}[Rank Collapse in SVD-based heterogeneous FedLoRA]
\label{thm:erc}
Let $\rho_{r_1}^{(t)} = \frac{\sum_{i=1}^{r_1}  e_i^{(t)}}{\sum_{j=1}^{r_{\max}}  e_j^{(t)}}$ denote the cumulative expected energy ratio of the global update within the shared rank $r_1$ at round $t$. Then the effective rank of the global update collapses to $r_1$ at a geometric
rate. Specifically, for any $t \ge 0$,
\begin{equation}
\label{eq:higher-rank-energy}
1 -  \rho_{r_1}^{(t)} \le C \gamma^t,
\end{equation}
where, the initial energy imbalance constant \(C\) and the convergence rate \(\gamma\) are given by
\begin{equation}
\label{eq:convergence-speed}
C = \frac{\sum_{j=r_1+1}^{r_{\max}} {e}_j^{(0)}}%
         {\sum_{i=1}^{r_1} {e}_i^{(0)}} 
\quad \left(\sum_{i=1}^{r_1} e_i^{(0)} > 0\right),
\qquad
\gamma = \frac{q_{r_1+1}}{q_{r_1}} < 1.
\end{equation}
Here $q_i=\beta^2 h(p_i)$ is the sampling-induced contraction factor, where $p_i$ denotes the rank coverage rate, and $h(p) = p^2+\frac{K-M}{M(K-1)}p(1-p)$ is increasing in $p$. Consequently, $\lim_{t\to\infty}\rho_{r_1}^{(t)}=1$.
\end{theorem}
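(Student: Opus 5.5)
Under Assumptions~\ref{assump:fixed_basis}--\ref{assump:client_updates}, the plan is to derive an exact per-direction recursion for the expected energy $e_i^{(t)}$ and then compare the shared and higher directions.

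\textbf{Step 1: per-direction coefficients.} Since the basis $\{u_iv_i^\top\}$ is fixed and client $k$ updates only directions $i\le r_k$, the aggregation \eqref{eq:aggregation} gives
\[
\sigma_i^{(t+1)}=\frac{\beta\,\sigma_i^{(t)}}{M}\sum_{k\in\mathcal{M}_t}\mathbf{1}[r_k\ge i]=\beta\,\sigma_i^{(t)}\,\frac{N_i^{(t)}}{M},
\]
where $N_i^{(t)}$ counts the sampled clients covering direction $i$. Because the SVD step \eqref{eq:svd} only re-expresses this diagonal-in-basis matrix, the coefficients carry over to the next round. I will keep the fixed index ordering, so energies are tracked per fixed direction rather than after re-sorting singular values.

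\textbf{Step 2: sampling moment.} Under uniform sampling without replacement, $N_i^{(t)}$ is hypergeometric with population $K$, $Kp_i$ successes and $M$ draws. It is independent of $\sigma_i^{(t)}$, since it depends only on the fresh sample $\mathcal{M}_t$. The standard mean and variance formulas give
\[
\mathbb{E}\big[(N_i/M)^2\big]=p_i^2+\frac{K-M}{M(K-1)}p_i(1-p_i)=h(p_i).
\]
Hence $e_i^{(t+1)}=q_i e_i^{(t)}$ and $e_i^{(t)}=q_i^t e_i^{(0)}$. Write $h(p)=(1-c)p^2+cp$ with $c=\frac{K-M}{M(K-1)}\in[0,1]$; then $h'(p)=2(1-c)p+c>0$ on $p>0$, so $h$ is increasing. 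By \eqref{eq:rank_coverage}, $q_i=q_{r_1}=\beta^2$ for $i\le r_1$, and $q_j\le q_{r_1+1}<q_{r_1}$ for $j>r_1$. The strict inequality holds because $p_{r_1+1}<1$.

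\textbf{Step 3: the bound.} The quantity $1-\rho_{r_1}^{(t)}=\frac{H_t}{S_t+H_t}$ is at most $H_t/S_t$, where $S_t=\sum_{i\le r_1}e_i^{(t)}$ and $H_t=\sum_{j>r_1}e_j^{(t)}$. We have $S_t=q_{r_1}^tS_0$ and $H_t\le q_{r_1+1}^tH_0$. Dividing gives $1-\rho_{r_1}^{(t)}\le C\gamma^t$ with $\gamma=q_{r_1+1}/q_{r_1}<1$, and $\beta^2$ cancels from $\gamma$. The limit $\rho_{r_1}^{(t)}\to1$ follows immediately.

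\textbf{Main obstacle.} The delicate point is Step 1's interaction with SVD re-sorting. A higher-index direction could overtake a shared one, so the $i$-th largest singular value would no longer match the $i$-th basis element. I would handle this by noting that the theorem's ratio is defined via the fixed basis of Assumption~\ref{assump:fixed_basis}; the bound for the sorted version then follows, since re-sorting can only increase the top-$r_1$ energy. A second check is the independence of $N_i^{(t)}$ from the past, which is needed to factor the expectation.
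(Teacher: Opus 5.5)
Your proposal is correct and matches the paper's proof step for step: the per-direction recursion $\sigma_i^{(t+1)}=\beta\,(N_i^{(t)}/M)\,\sigma_i^{(t)}$, the hypergeometric second moment giving $q_i=\beta^2 h(p_i)$, monotonicity of $h$ via the rewrite $h(p)=(1-\tau)p^2+\tau p$, and the bound $1-\rho_{r_1}^{(t)}\le H_t/S_t\le C\gamma^t$. Your re-sorting worry does not arise under this model, since $N_{j'}^{(t)}\le N_j^{(t)}$ holds pathwise for $j<j'$, so an initially descending ordering of singular values is preserved in every round (the paper simply works in the fixed indexing without comment).
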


The intuition behind Theorem~\ref{thm:erc} is a mismatch in heterogeneous-rank aggregation: all rank directions use the same aggregation weight, while higher-rank directions are supported by fewer clients.
For a rank direction $i$, only clients with $r_k \ge i$ contribute, yielding $p_i M$ effective contributors in expectation.
However, \texttt{FedAvg}~\cite{mcmahan2017communication} still normalizes by the total number of clients $M$, regardless of this rank-dependent support.
Thus, in expectation, the update along direction $i$ behaves as
\begin{equation}
\label{eq:sigma}
\mathbb{E}[\sigma_i^{(t+1)} \mid \sigma_i^{(t)}] = \frac{p_i M \cdot (\beta \sigma_i^{(t)})}{M}
= p_i \beta \sigma_i^{(t)}.
\end{equation}

\begin{wrapfigure}{r}{0.55\columnwidth}
\vspace{-0.15in}
\centering
\includegraphics[width=\linewidth]{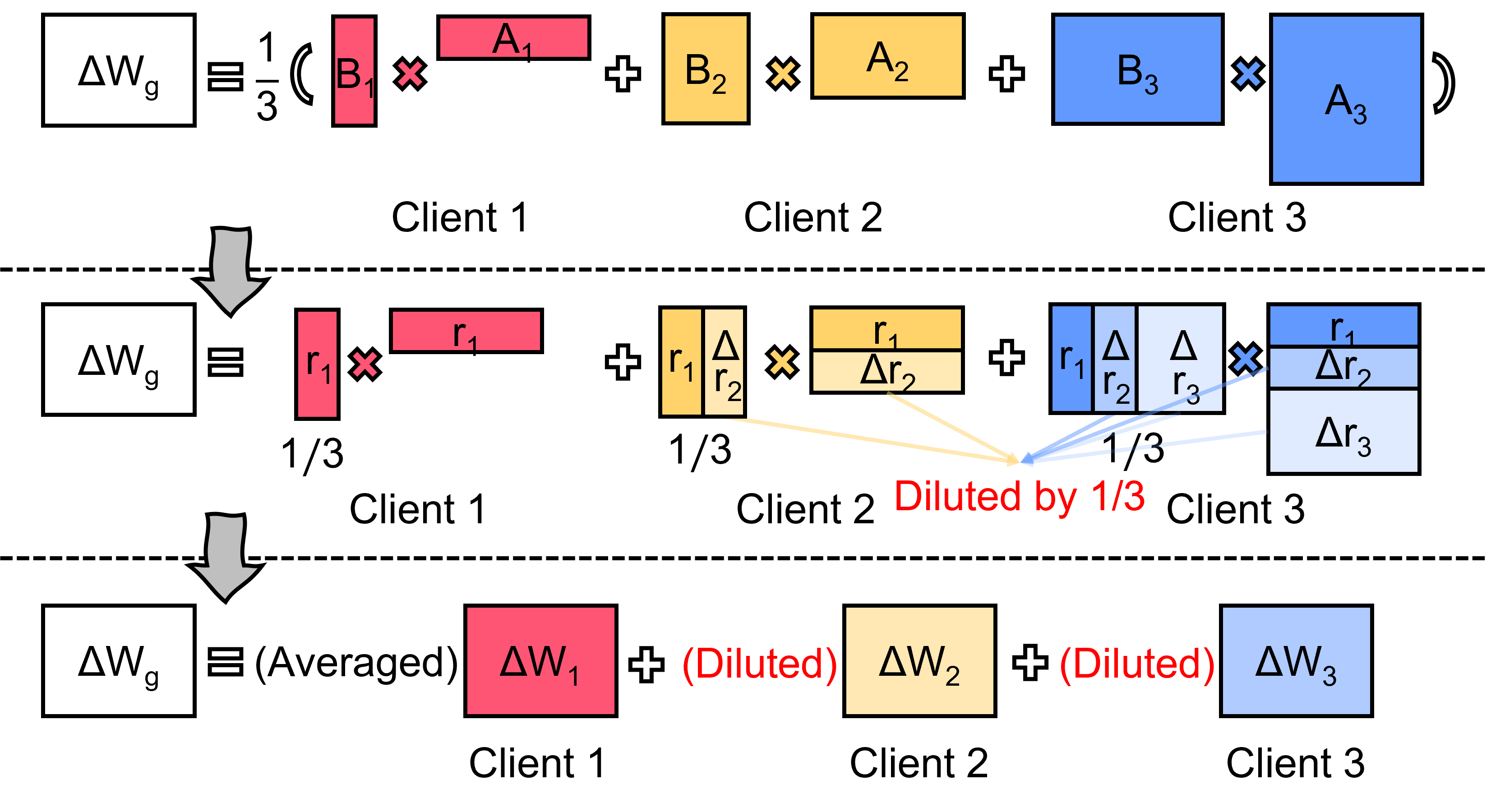}
\caption{Illustration of mismatch in aggregation.}
\label{fig:flexlora}
\vspace{-0.15in}
\end{wrapfigure}
As illustrated in Figure~\ref{fig:flexlora}, this mismatch systematically attenuates higher-rank updates at each aggregation round, and the effect accumulates over training to ultimately induce rank collapse.
For the shared directions ($i \le r_1$), we have $p_i = 1$, and the updates are properly averaged.
In contrast, for higher-rank directions ($i > r_1$), $p_i < 1$, causing their updates to be systematically suppressed (\textit{i.e.,} diluted under uniform averaging) by a multiplicative factor $p_i$ across rounds.
Consequently, energy in higher-rank directions decays geometrically relative to the shared rank $r_1$, and the global update becomes dominated by the top-$r_1$ directions.

The above intuition is derived under assumptions that isolate the rank-wise averaging mismatch.
Under general non-IID settings, the same mechanism can still contribute to the observed rank-collapse tendency, though the dynamics are additionally modulated by basis drift, heterogeneous local update strengths, and cross-direction mixing.
When clients update different local subspaces, shared low-rank directions are covered by more clients, whereas sparsely covered higher-rank directions are more affected by rank-dependent participation and potential misalignment.
Thus, higher-rank components can be relatively suppressed by uniform-averaging dilution and less effective cross-round accumulation, providing a qualitative explanation for why rank-collapse tendencies may persist.
A detailed mean-field analysis under relaxed assumptions is provided in Appendix~\ref{appendix:extension-analysis}.

%% file: Sections/4-methodology.tex

The analysis identifies the rank-wise averaging mismatch as the root cause of rank collapse. Motivated by this insight, we propose a new aggregation strategy to correct this mismatch.

We formally introduce \texttt{raFLoRA}, which performs \textbf{r}ank-partitioned \textbf{a}ggregation for \textbf{F}ederated \textbf{Lo}w-\textbf{R}ank \textbf{A}daptation with clients heterogeneity.
Unlike conventional \texttt{FedAvg}~\cite{mcmahan2017communication}, which uniformly averages all uploaded updates, \texttt{raFLoRA} partitions the local updates into independent rank-wise components, with aggregation weights aligned to effective rank-wise contributors.

Let $\mathcal{R}=\{r_1, r_2, \cdots, r_{\max}\}$ denote the ordered boundaries with $r_1 < r_2 < \cdots < r_{\max}$.
For each $h\in\mathcal{R}$, define
\[
\mathrm{prev}(h)=
\begin{cases}
0, & h=r_1,\\
\max\{r\in\mathcal{R}\mid r<h\}, & \text{otherwise},
\end{cases}
\]
which induces a rank partition $[l,h]$ with $l=\mathrm{prev}(h)+1$.
The full rank $r_{\max}$ is thus partitioned into non-overlapping rank
partitions.
As illustrated in Figure~\ref{fig:raflora}, when three clients have local ranks $r_1$, $r_2$, and $r_3$, these boundaries divide the full rank into three partitions.

\begin{wrapfigure}{r}{0.55\columnwidth}
  \vspace{-0.15in}
  \centering
  \includegraphics[width=0.9\linewidth]{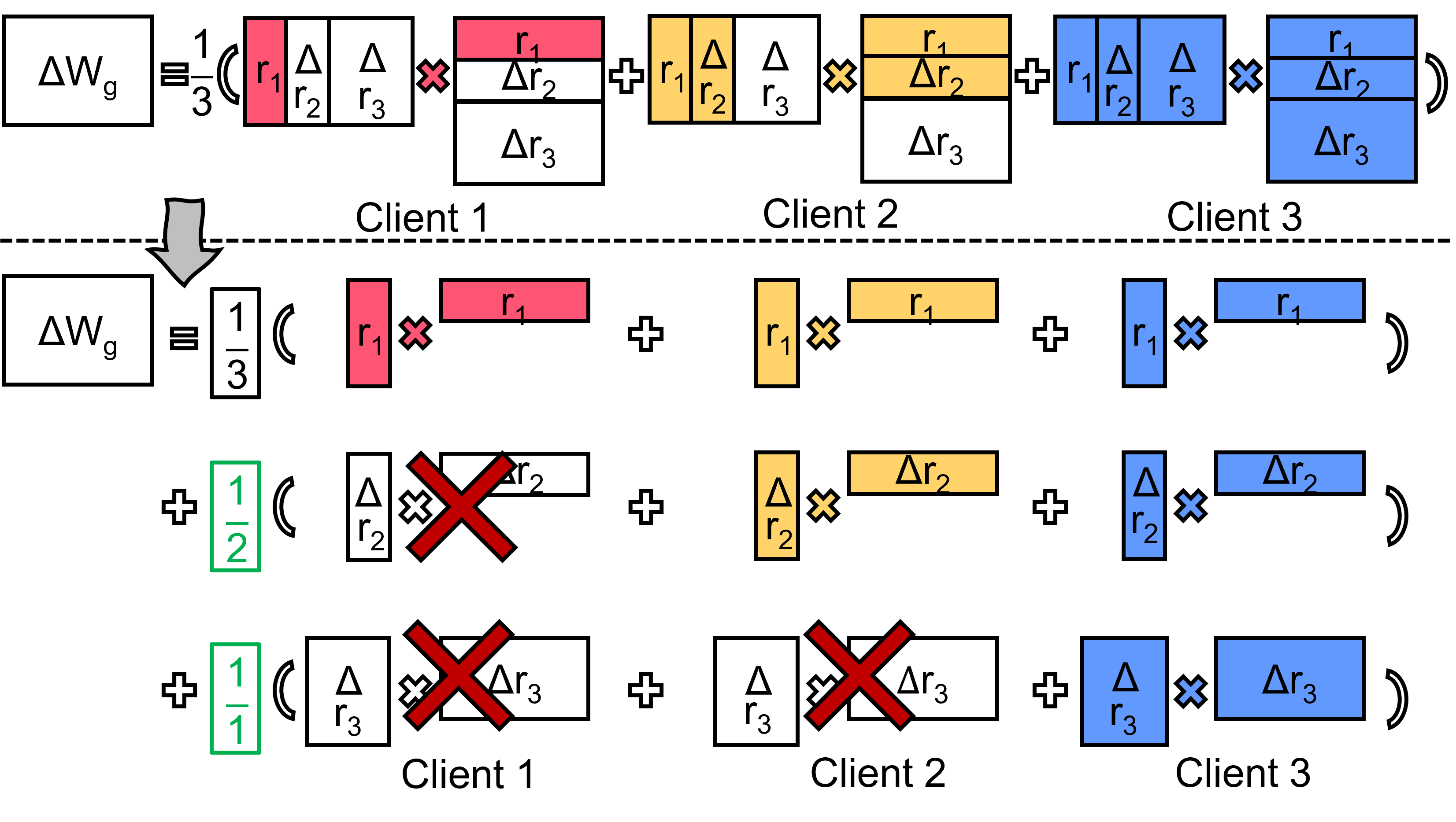}
  \caption{Overview of the rank-partitioned aggregation.}
  \label{fig:raflora}
  \vspace{-0.15in}
\end{wrapfigure}
For each rank partition ending at boundary $h$, aggregation uses only the clients
whose local rank satisfies $r_k \ge h$, so that global aggregation weights reflect the
clients that effectively contribute information at this rank.
We denote this set of effective contributors by
$\mathcal{C}_h = \{ k \in \mathcal{M}_t \mid r_k \ge h \}$, with the corresponding total sample size $N_h = \sum_{k \in \mathcal{C}_h} n_k$, where $n_k$ denotes the local sample size of the client $k$.
As illustrated in Figure~\ref{fig:raflora}, all clients contribute to the shared-rank partition $[1,r_1]$, fewer clients contribute to the intermediate second partition $[r_1+1,\,r_2]$, and only the highest-rank client contributes to the last partition $[r_2+1,\,r_3]$.

Given a specific rank partition $[l,h]$, its contribution at round $t$ is
defined as
\begin{equation}
\Delta W_h^{(t)}
=
\begin{cases}
\sum_{k \in \mathcal{C}_h}
\frac{n_k}{N_h}
\Big(
B_k^{(t)}[:,\,l{:}h]\;
A_k^{(t)}[l{:}h,\,:]
\Big), & \mathcal{C}_h\neq\emptyset, \\[4pt]
B_g^{(t)}[:,\,l{:}h]\;
A_g^{(t)}[l{:}h,\,:], & \mathcal{C}_h=\emptyset.
\end{cases},
\end{equation}
where the slice $[:,\,l{:}h]$ selects all rows and rank indices $l$ through $h$ of $B_k^{(t)}$, and
$[\,l{:}h,\,:]$ selects rank indices $l$ through $h$ and all columns of $A_k^{(t)}$. If no sampled client covers this rank partition in a given round, i.e., $\mathcal{C}_h=\emptyset$, the rank partition is obtained from the current global LoRA updates rather than being skipped. This prevents previously retained higher-rank information from being discarded.

The global update is then obtained by summing the contributions from all rank
partitions,
$\Delta W_g^{(t+1)} = \sum_{h \in \mathcal{R}} \Delta W_h^{(t)}$.
Accordingly, in the example illustrated in Figure~\ref{fig:raflora}, the aggregation weights are $1/3$ for the shared-rank partition where three clients contribute, $1/2$ for the second partition where two clients contribute, and $1$ for the last partition where only a single client provides updates. After aggregation, $\Delta W_g^{(t+1)}$ is projected via SVD
(Eqs.~\ref{eq:svd}--\ref{eq:reconstruction_ba}) to obtain
$B_g^{(t+1)}$ and $A_g^{(t+1)}$.

Algorithm~\ref{alg:raflora} summarizes the \texttt{raFLoRA} workflow, which incorporates rank-partitioned aggregation with heterogeneous client ranks. 
In each round, the server uniformly samples clients, broadcasts the global LoRA updates, and collects local updates in parallel (Lines~3--5).
The server then performs rank-partitioned aggregation, aggregating each rank partition using only its effective contributors (Lines~6--10). An SVD then yields the global low-rank updates for the next round (Line~11).

For $L$ LoRA layers, $M$ participating clients, and $H$ rank partitions, reconstructing local updates has the same dominant cost as \texttt{FlexLoRA}, i.e., $O(L\sum_{k=1}^{M} d n r_k)=O(LMd n\bar r)$, where $\bar r$ is the average client rank. The partition-wise aggregation introduces an additional cost of $O(LHMdn)$, leading to a total complexity of $O(LMd n\bar r(1+H/\bar r))$. Since $H\ll \bar r$ in practice, this extra overhead remains limited.
In the next section, we empirically demonstrate the effectiveness of \texttt{raFLoRA}.

\begin{algorithm}[t]
\footnotesize
\caption{Rank-partitioned Aggregation for Federated LoRA with Client Heterogeneity}
\label{alg:raflora}
\begin{algorithmic}[1]
\State \textbf{Input:} total rounds $T$; total clients $K$; participation ratio per round $\rho$; ranks $r_k\in\{r_1,r_2,\dots,r_{\max}\}$; $W_{\text{pre}}\in\mathbb{R}^{d\times n}$; initial global LoRA updates $(B_g^{(0)},A_g^{(0)})$ with $B_g^{(0)}\!\in\!\mathbb{R}^{d\times r_{\max}},A_g^{(0)}\!\in\!\mathbb{R}^{r_{\max}\times n}$
\For{$t=0$ \textbf{to} $T-1$}
    \State Sample participating clients $\mathcal{M}_t \subseteq \{1,\dots,K\}$ uniformly with $|\mathcal{M}_t|=M$

    \State Only broadcast LoRA updates for each client $k\in\mathcal{M}_t$:
    \quad $\widetilde{B}_k^{(t)} = B_{g}^{(t)}[:,:r_k],\widetilde{A}_k^{(t)} = A_{g}^{(t)}[:r_k,:]$
    \State \textbf{In parallel}, each client $k\in\mathcal{M}_t$ trains from 
    $(\widetilde{B}_k^{(t)},\widetilde{A}_k^{(t)})$ and uploads $(B_k^{(t)},A_k^{(t)})$

    \State \textbf{Rank-partitioned Aggregation} at Server: \quad Initialization $\Delta W_g^{(t+1)} = 0$
    \For{each rank boundary $h\in\mathcal{R}=\{r_1, r_2, \cdots, r_{\max}\}$}
        \State $l\gets \mathrm{prev}(h)+1$, \quad   $\mathcal{C}_h\gets\{k\in\mathcal{M}_t:r_k\ge h\}$, \quad $N_h=\sum_{j\in\mathcal{C}_h}n_j$,
        \State $\Delta W_h^{(t)}\gets
        \begin{cases}
        \sum_{k\in\mathcal{C}_h}\frac{n_k}{N_h}\big(B_k^{(t)}[:,l{:}h]A_k^{(t)}[l{:}h,:]\big), & \mathcal{C}_h\neq\emptyset,\; \\
        B_g^{(t)}[:,l{:}h]A_g^{(t)}[l{:}h,:], & \mathcal{C}_h=\emptyset.
        \end{cases}$, \quad $\Delta W_g^{(t+1)}\gets \Delta W_g^{(t+1)}+\Delta W_h^{(t)}$
    \EndFor
    \State Decompose $\Delta W_g^{(t+1)}$ to $(B_g^{(t+1)},A_g^{(t+1)})$ via SVD
\EndFor
\State \textbf{return} $(W_{\text{pre}} + B_g^{(T)}A_g^{(T)}$)
\end{algorithmic}
\end{algorithm}

%% file: Sections/5-experiments.tex
In this section, we comprehensively evaluate \texttt{raFLoRA}. We present the experimental setup, accuracy across diverse models and tasks, rank collapse prevention, communication and computation costs, sensitivity and robustness analyses, and extended experiments.

\subsection{Experimental Setup}
\label{exp-setup}
The experimental setup specifies the models and datasets for each task, metrics for evaluation, non-IID data partitioning, baselines, and hyperparameter settings. All experiments are performed on a single GPU, using an NVIDIA RTX 5090 (32 GB) or an NVIDIA H100-SXM (80 GB). Additional implementation details and more hyperparameter settings are provided in Appendix~\ref{appendix:hyperparameters}.

\textbf{Models and Datasets.}
Our experiments cover both encoder-only and decoder-only models across diverse tasks, including image and text classification, as well as mathematical and commonsense reasoning.
For image classification, we use ViT-base~\cite{dosovitskiy2021vit} on \mbox{CIFAR100}~\cite{Krizhevsky2009cifar100}.
For text classification, we adopt RoBERTa-base \cite{liu2019roberta} and evaluate on 20 Newsgroups~\cite{lang199520ng}.
For math reasoning, we evaluate LLaMA-3.2-3B~\cite{meta2024llama3.2}/LLaMA-3.1-8B~\cite{grattafiori2024llama3} on GSM8K~\cite{cobbe2021gsm8k}.
For commonsense reasoning, we fine-tune the same LLaMA models on Commonsense15K~\cite{hu2023commonsense} and evaluate them on eight benchmarks.

\textbf{Metrics.}
We report the test accuracy as mean $\pm$ standard deviation over three random seeds and the energy ratio. In addition, communication cost is measured as the total upload and download volume per client per round, while computational cost is defined as the total training runtime.

\textbf{Data Partitioning.}
We consider IID and multiple non-IID data partitioning strategies, using the following default settings unless otherwise specified. For vision and NLU tasks, we adopt (i) regular Dirichlet-based partitioning, where a smaller $\alpha$ indicates stronger non-IIDness, and (ii) a pathological non-IID setting~\cite{zhang2025fedhello}, where each client is restricted to a subset of labels. For example, c20($\alpha=1$) denotes a setting with only 20 labels per client and Dirichlet parameter $\alpha=1$. GSM8K is uniformly partitioned across clients, whereas Commonsense15K is partitioned by answer type with $\alpha=0.5$.

\textbf{Baselines.}
Our experimental baselines include state-of-the-art methods designed for FedLoRA with heterogeneous ranks. \texttt{HetLoRA}~\cite{cho2024hetlora} aligns heterogeneous ranks via zero padding, and we follow its zero-padding-based aggregation mechanism. \texttt{FLoRA}~\cite{wang2024flora} performs aggregation through heterogeneous stacking, with stacked updates sent from the server to clients. \texttt{FlexLoRA}~\cite{bai2024flexlora} adopts an aggregation bias-free scheme and allocates heterogeneous ranks using SVD.

\textbf{Hyperparameters.}
Unless otherwise specified, we use 100 clients with a 10\% participation rate per round, the AdamW optimizer, and one local epoch per round. The learning rate is initialized at $5\times10^{-4}$ and linearly decayed over rounds. LoRA ranks are uniformly sampled from $\{8,16,32,48,64\}$ with LoRA alpha set to $r_k$ to yield unit scaling. For vision and NLU tasks, training runs for 100 rounds with LoRA applied to all linear layers, while for reasoning tasks, training runs for 20 rounds with LoRA applied only to the $Q$ and $V$ modules.

\begin{table}[t]
\vspace{-0.2in}
\caption{Accuracy comparison across diverse tasks. Accuracy is reported in (\%), with the best results highlighted in bold. Results are obtained using ViT-base for vision, RoBERTa-base for text, LLaMA-3.2-3B/LLaMA-3.1-8B for mathematical and commonsense reasoning.}
\label{tab:main-results}
\centering
\scriptsize
\resizebox{\linewidth}{!}{
\begin{tabular}{l|c|c|c|c}
\toprule
\multirow{2}{*}{\textbf{Methods}} & Vision & Language & {Mathematical Reasoning} & {Commonsense Reasoning}\\
 & \textbf{CIFAR100}  & \textbf{20NG} & \textbf{GSM8K} & \textbf{Commonsense15K}\\
\midrule
\texttt{HetLoRA} & 84.04$_{\pm1.37}$ & 62.06$_{\pm0.67}$ & 36.75$_{\pm1.23}$/56.43$_{\pm0.29}$ & 71.89$_{\pm0.41}$/79.45$_{\pm1.00}$ \\
\texttt{FLoRA} & 86.30$_{\pm0.79}$ & 61.91$_{\pm0.52}$ & 36.09$_{\pm0.57}$/56.25$_{\pm0.85}$ & 71.02$_{\pm0.18}$/79.01$_{\pm0.38}$ \\
\texttt{FlexLoRA} & 84.02$_{\pm1.17}$ & 63.05$_{\pm0.83}$ & 40.21$_{\pm0.42}$/58.43$_{\pm0.74}$ & 74.33$_{\pm0.50}$/80.88$_{\pm0.67}$ \\
\rowcolor{lightorange}
\texttt{raFLoRA} & \textbf{86.59}$_{\pm0.75}$ & \textbf{64.80}$_{\pm0.34}$ & \textbf{41.72}$_{\pm0.70}$/\textbf{59.06}$_{\pm0.27}$ & \textbf{74.86}$_{\pm0.39}$/\textbf{81.15}$_{\pm0.08}$\\
\bottomrule
\end{tabular}
}
\vspace{-0.1in}
\end{table}
\subsection{Accuracy across Diverse Tasks}
We evaluate \texttt{raFLoRA} across vision, language, and reasoning tasks, with data partitioning and hyperparameter details provided in Appendix~\ref{appendix:hyperparameters}.
As shown in Table~\ref{tab:main-results}, \textbf{\texttt{raFLoRA} delivers consistent improvements over baselines across a wide range of tasks.}
Specifically, \texttt{raFLoRA} outperforms \texttt{FlexLoRA} and \texttt{HetLoRA} by about $2.6\%$ on CIFAR100, and improves over \texttt{FLoRA} by nearly $3.0\%$ on 20NG.
On more challenging reasoning tasks, \texttt{raFLoRA} also achieves higher performance on GSM8K with both LLaMA-3.2-3B and LLaMA-3.1-8B models.
For commonsense reasoning, \texttt{raFLoRA} attains the highest average accuracy across eight sub-datasets. Detailed per-task results are provided in Appendix~\ref{appendix:commonsense-results}.
The weaker performance of \texttt{FLoRA} on some tasks may be related to its cold-start behavior, where LoRA parameters are reinitialized at each local training round. Additional training dynamics of accuracy and loss over FL rounds are provided in Appendix~\ref{app:training-dynamics}.

\subsection{Rank Collapse Prevention}

\begin{wrapfigure}{r}{0.42\columnwidth}
\vspace{-0.35in}
\centering

\begin{subfigure}{\linewidth}
\centering
\includegraphics[width=\linewidth]{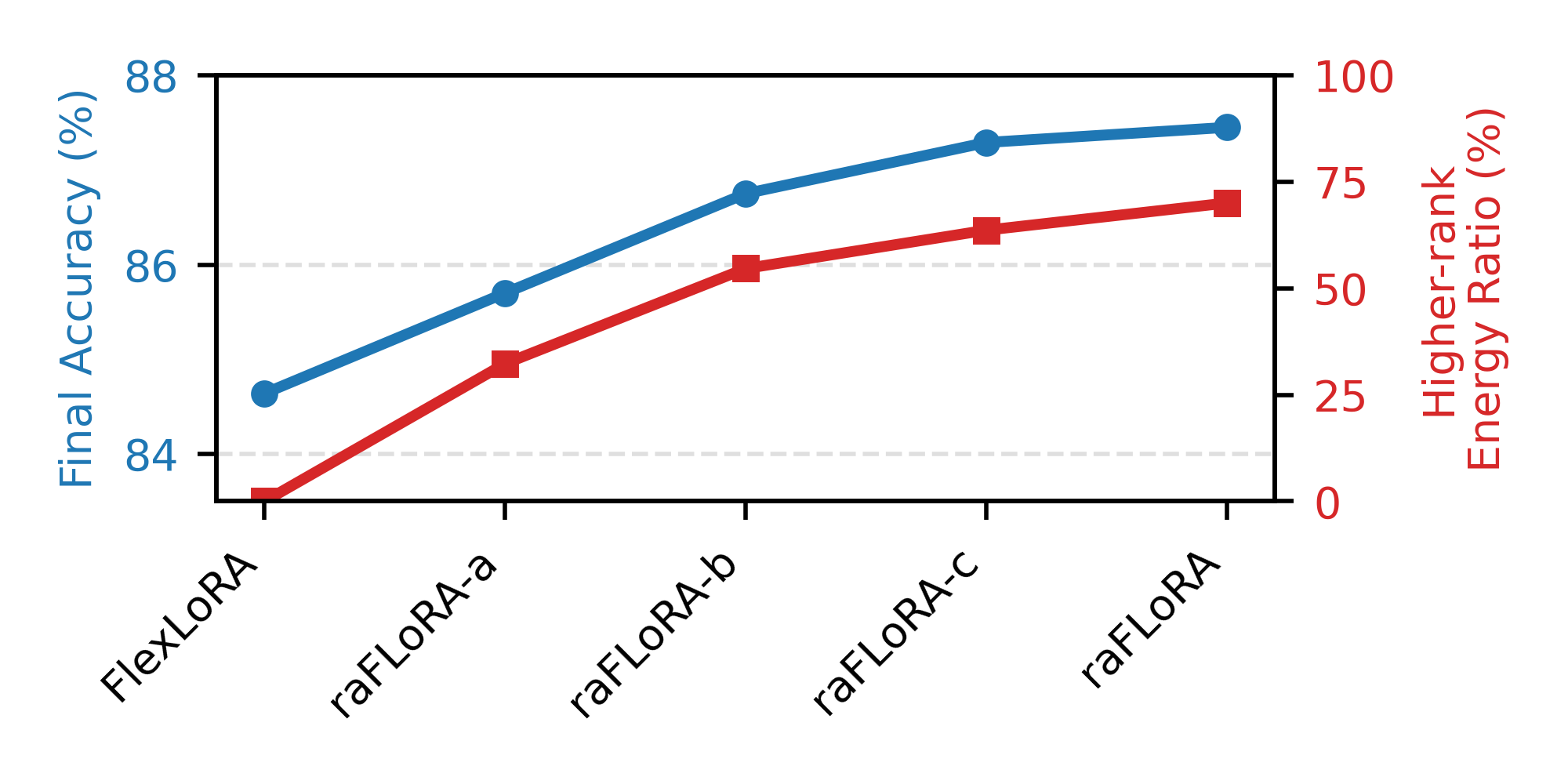}
\caption{Rank collapse prevention.}
\label{fig:rank-collapse-prevention}
\end{subfigure}

\vspace{-0.02in}

\begin{subfigure}{\linewidth}
\centering
\includegraphics[width=\linewidth]{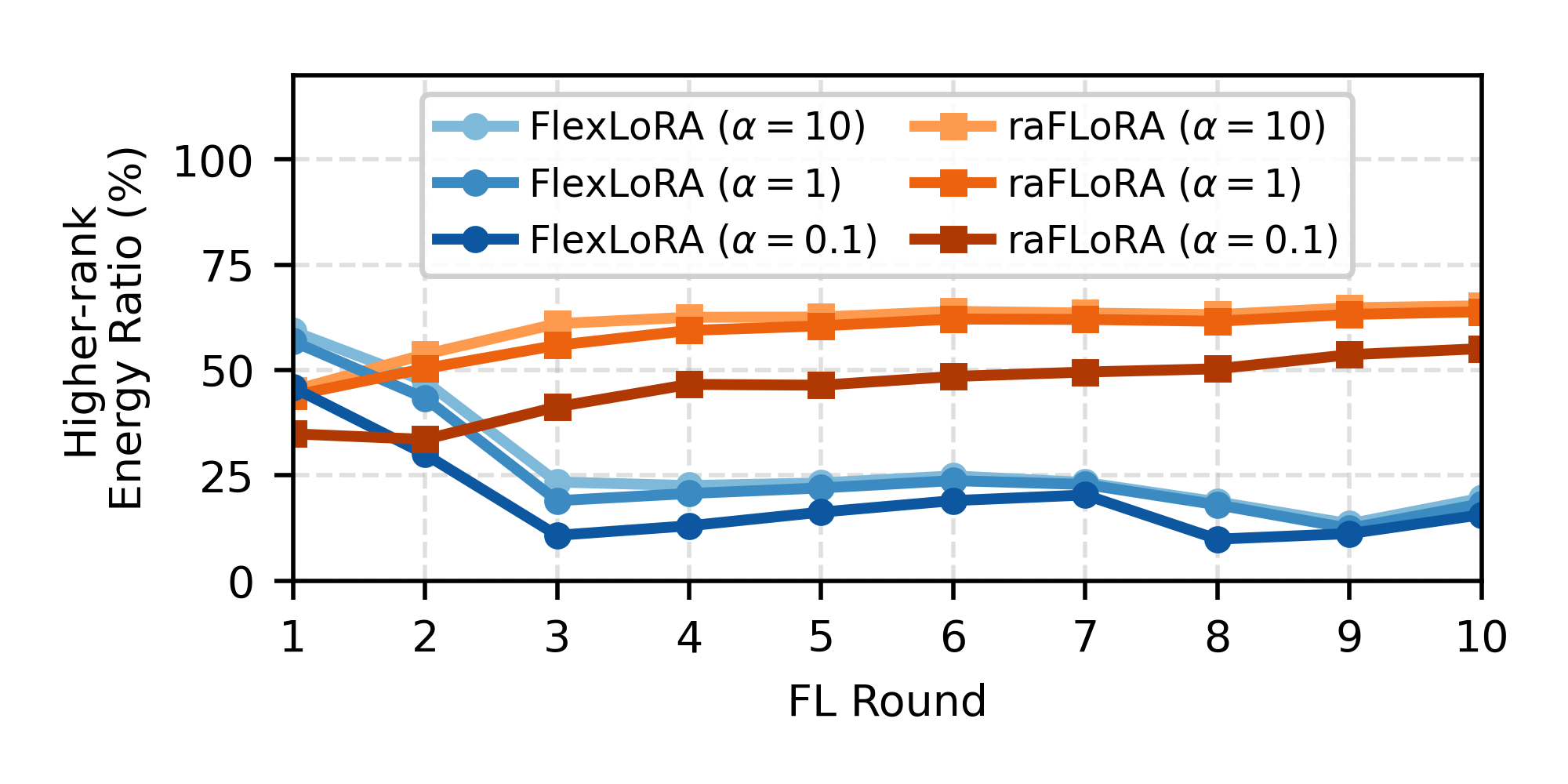}
\caption{Energy under data heterogeneity.}
\label{fig:energy-niid}
\end{subfigure}

\caption{Rank collapse prevention and Higher-rank energy ratio dynamics.}
\label{fig:rank-collapse-analysis}
\vspace{-0.2in}
\end{wrapfigure}

We construct partial variants (\texttt{raFLoRA}-a/b/c) on CIFAR100, applying effective-contributor weighting up to rank partitions $(8,16)$, $(8,16,32)$, and $(8,16,32,48)$, respectively, with the remaining partitions using baseline aggregation. Within the considered SVD-based heterogeneous FedLoRA setting, \textbf{by preserving higher-rank energy, rank-partitioned aggregation prevents rank collapse and improves final performance.}
As shown in Figure~\ref{fig:rank-collapse-prevention}, applying rank-partitioned aggregation to more partitions improves higher-rank energy preservation and is accompanied by better accuracy.

Additionally, we analyze the higher-rank energy ratio dynamics of \texttt{FlexLoRA} and \texttt{raFLoRA} under varying data heterogeneity on CIFAR100. As shown in Figure~\ref{fig:energy-niid}, stronger data heterogeneity reduces the preservation of higher-rank energy. This observation supports our analysis in Section~\ref{sec:analysis}, indicating that data heterogeneity weakens the alignment and effective accumulation of higher-rank updates during aggregation.

\subsection{Communication and Computation Costs}
\label{subsec-cost}
\begin{wraptable}{r}{0.5\columnwidth}
\vspace{-0.2in}
\caption{Comparison of total training runtime and average communication cost per client per round.}
\label{tab:cost}
\centering
\scriptsize
\resizebox{\linewidth}{!}{
\begin{tabular}{l|cc|cc}
\toprule
\multirow{2}{*}{\textbf{Methods}} & \multicolumn{2}{c|}{ViT-base} & \multicolumn{2}{c}{LLaMA-3.1-8B}\\
 & \textbf{Comp.} & \textbf{Comm.} & \textbf{Comp.} & \textbf{Comm.}\\
\midrule
\texttt{HetLoRA}        & 40m50s & 41MB  & 1h33m55s & 109MB \\
\texttt{FLoRA}          & 47m05s & 226MB & 1h40m11s & 580MB\\
\texttt{FlexLoRA}       & 40m40s & 41MB  & 1h38m09s & 109MB\\
\rowcolor{lightorange}
\texttt{raFLoRA}        & 42m55s & 41MB  & 1h50m56s & 109MB \\
\bottomrule
\end{tabular}
}
\vspace{-0.1in}
\end{wraptable}
We evaluate the communication and computational costs of different methods during fine-tuning. As shown in Table~\ref{tab:cost}, \textbf{\texttt{raFLoRA} maintains competitive communication efficiency while introducing only modest additional computational overhead}. Compared with \texttt{FLoRA}, \texttt{raFLoRA} reduces the communication cost to about $18\%$ on both ViT-base and LLaMA-3.1-8B, as it avoids synchronizing the full global update. Although rank-partitioned aggregation slightly increases runtime, the overhead remains controlled and is consistent with the complexity analysis in Section~\ref{sec:method}.

\subsection{Sensitivity and Robustness Analyses}
\label{sen-rob}
Since rank collapse makes \texttt{FlexLoRA} more sensitive to rank configurations and data heterogeneity, we compare \texttt{raFLoRA} and \texttt{FlexLoRA} under diverse heterogeneous settings to assess robustness.

\textbf{Effect of non-IID settings.}
We conduct experiments on CIFAR100 and 20NG under varying data heterogeneity settings. Figures~\ref{fig:acc-niid} and~\ref{fig:ablation-niid} demonstrate that \texttt{raFLoRA} is more robust to data heterogeneity than \texttt{FlexLoRA}. As heterogeneity increases, \texttt{FlexLoRA} exhibits pronounced performance degradation, whereas \texttt{raFLoRA} mitigates this decline and consistently achieves better performance.

\begin{figure}[t]
  \vspace{-0.2in}
  \centering
  \begin{subfigure}{0.245\linewidth}
    \centering
    \includegraphics[width=\linewidth]{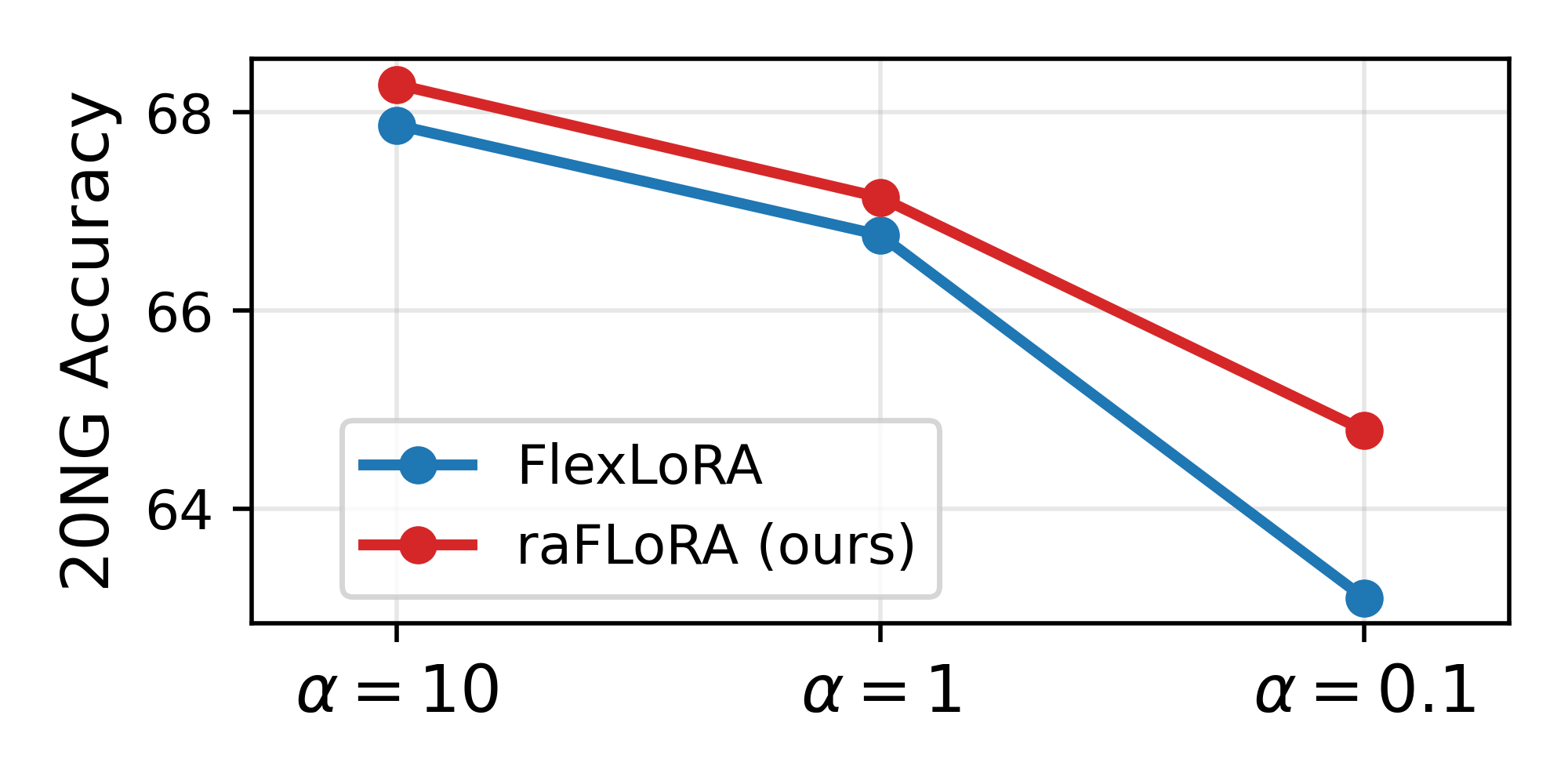}
    \caption{non-IID}
    \label{fig:ablation-niid}
  \end{subfigure}
  \hfill
  \begin{subfigure}{0.245\linewidth}
    \centering
    \includegraphics[width=\linewidth]{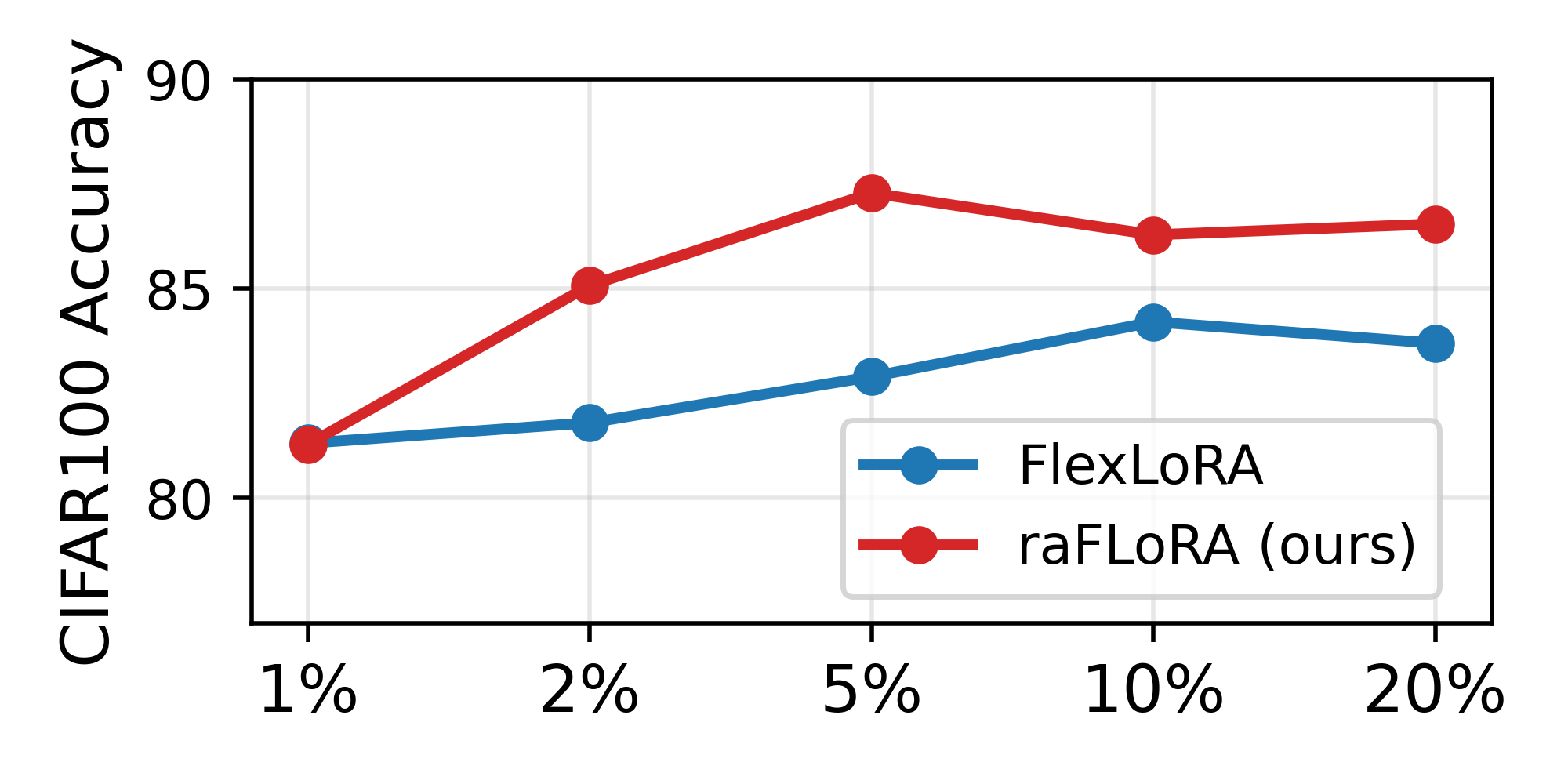}
    \caption{Client Participation}
    \label{fig:ablation-cpr}
  \end{subfigure}
  \hfill
  \begin{subfigure}{0.245\linewidth}
    \centering
    \includegraphics[width=\linewidth]{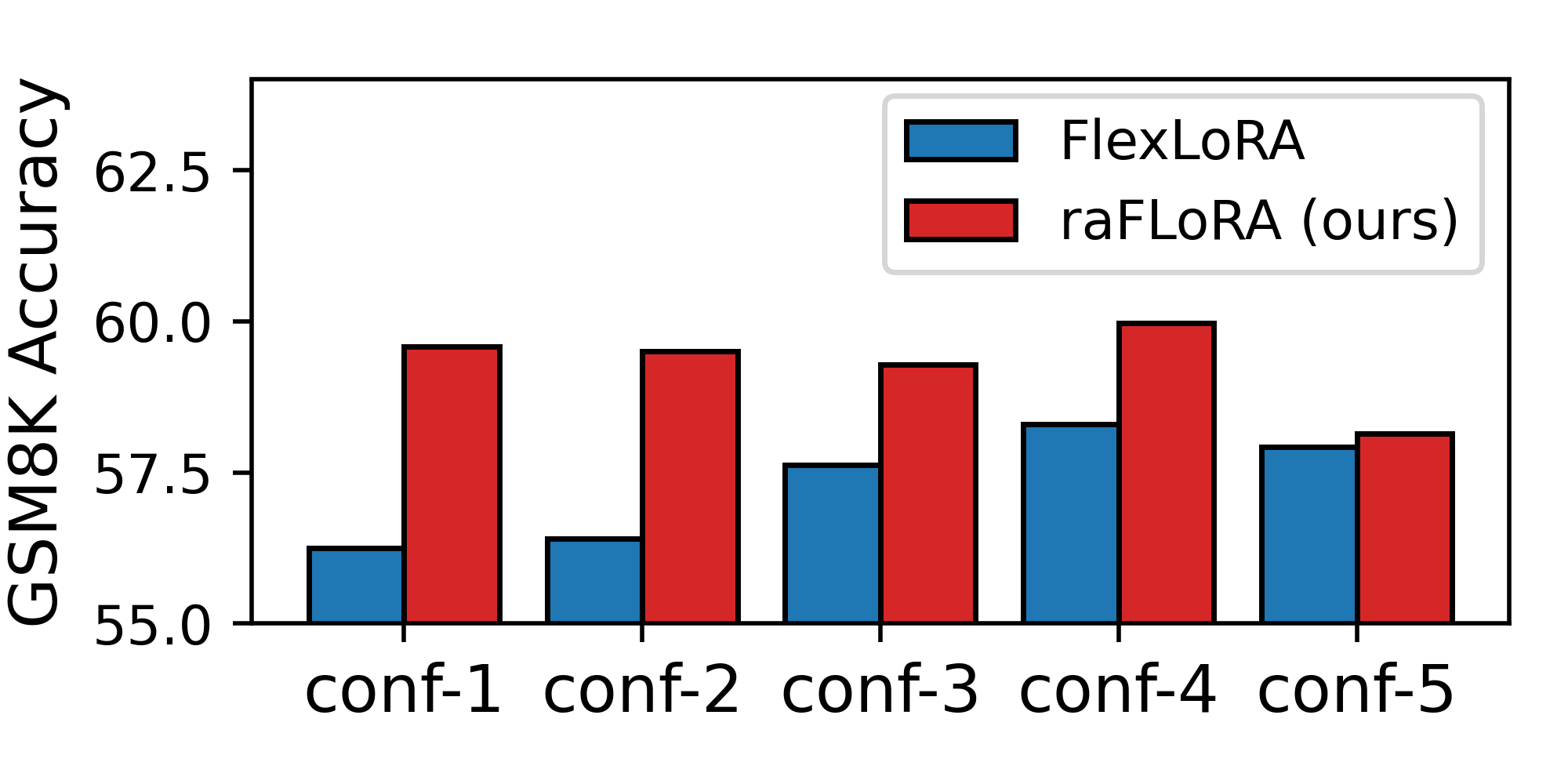}
    \caption{Rank-configs}
    \label{fig:ablation-ranks}
  \end{subfigure}
  \hfill
  \begin{subfigure}{0.245\linewidth}
    \centering
    \includegraphics[width=\linewidth]{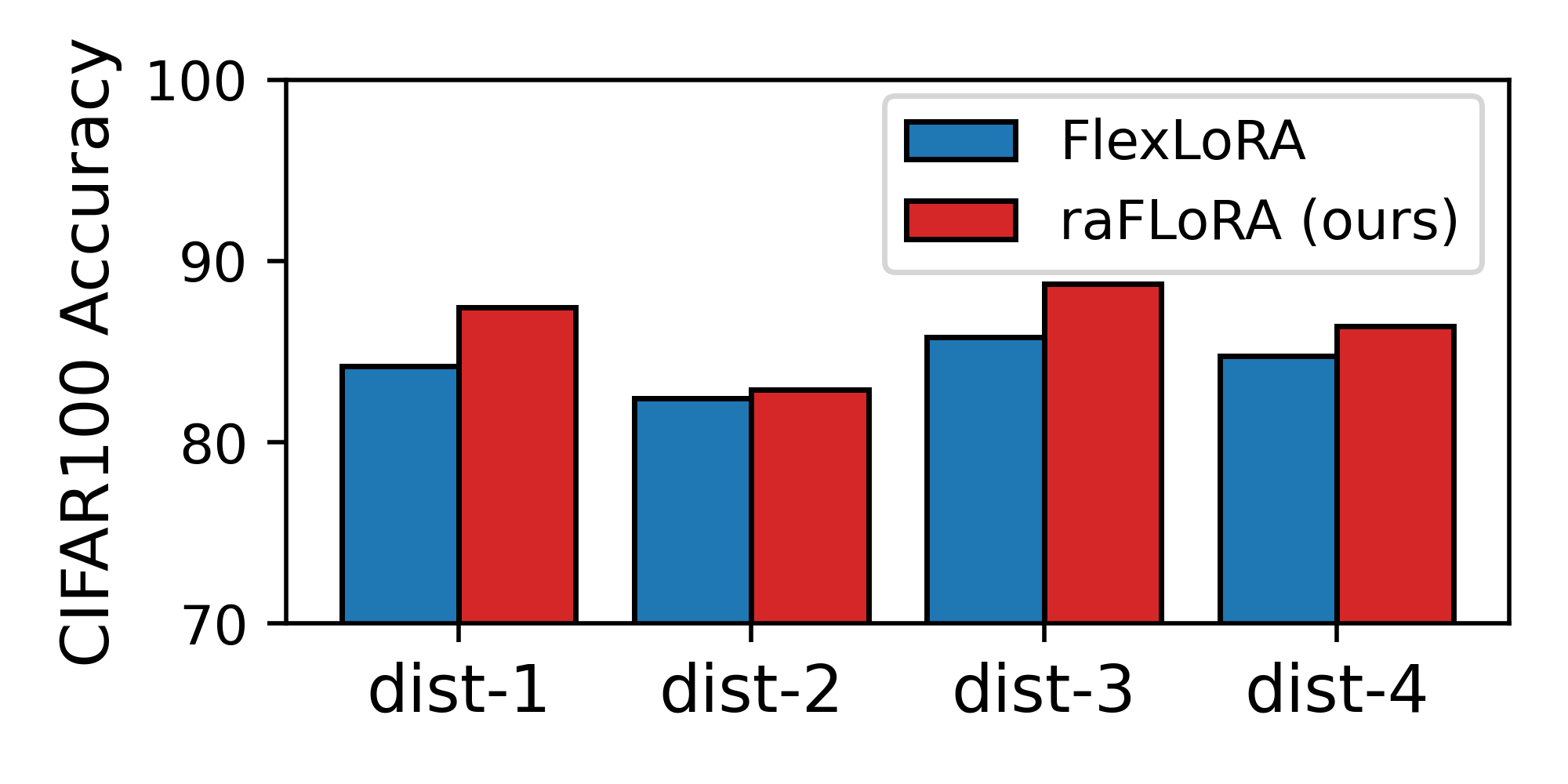}
    \caption{Distributions}
    \label{fig:ablation-distributions}
  \end{subfigure}

  \caption{Sensitivity and robustness analyses of \texttt{FlexLoRA} and \texttt{raFLoRA} under different settings. The detailed configurations for (c) and (d) are provided in Section~\ref{sen-rob}.}
  \label{fig:ablation}
  \vspace{-0.2in}
\end{figure}

\textbf{Effect of client participation rates.}
We conduct experiments on CIFAR100 with varying client participation rates.
Figure~\ref{fig:ablation-cpr} shows that \texttt{raFLoRA} consistently outperforms \texttt{FlexLoRA}, with performance first improving and then stabilizing as more clients participate. When only one client participates, \texttt{raFLoRA} reduces to \texttt{FlexLoRA}, since there is no dilution of rank-wise effective contributors.

\textbf{Effect of rank configurations.}
We conduct experiments on  CIFAR100 and GSM8K under different rank configurations, where conf-1 to conf-5 correspond to $\{1,16,32,48,64\}$, $\{4,16,32,48,64\}$, $\{8,16,32,48,64\}$, $\{8,16,32,48,96\}$, and $\{8,16,32,48,128\}$, respectively. 
As illustrated in Figures~\ref{fig:acc-config} and~\ref{fig:ablation-ranks}, \texttt{raFLoRA} demonstrates performance gains over \texttt{FlexLoRA} across all configurations and avoids the pronounced sensitivity to the minimal rank. Additional experiments on extended rank configurations and LoRA module insertion settings are reported in Appendix~\ref{appendix:extended-exp}.

\textbf{Effect of rank distributions.}
We conduct experiments on CIFAR100 with ranks $\{8,16,32,48,64\}$ under four rank distributions: 
a uniform distribution (dist-1: $\{0.2,0.2,0.2,0.2,0.2\}$), 
a low-rank skewed distribution (dist-2: $\{0.7,0.1,0.1,0.05,0.05\}$), 
a high-rank skewed distribution (dist-3: $\{0.05,0.05,0.1,0.1,0.7\}$), 
and a bell-shaped distribution (dist-4: $\{0.05,0.1,0.7,0.1,0.05\}$).
As shown in Figure~\ref{fig:ablation-distributions}, \texttt{raFLoRA} outperforms \texttt{FlexLoRA} under most rank distributions and remains comparable under the dist-2 setting.
This suggests that the benefit of \texttt{raFLoRA} in preserving the hierarchical rank-energy structure may diminish when most clients are concentrated at low ranks.

\begin{table}[h]
\vspace{-0.15in}
\centering
\small
\begin{minipage}[t]{0.52\linewidth}
\centering
\caption{Performance under Gaussian noise.}
\label{tab:noise-low-rank}
\resizebox{\linewidth}{!}{
\begin{tabular}{lcccc}
\toprule
\textbf{Methods} & \textbf{$\nu=0.0$} & \textbf{$\nu=0.1$} & \textbf{$\nu=0.3$} & \textbf{$\nu=0.5$} \\
\midrule
\texttt{FlexLoRA} & 83.94\% & 85.15\% & 82.67\% & 81.45\% \\
\texttt{raFLoRA}  & \textbf{87.43\%} & \textbf{87.47\%} & \textbf{86.91\%} & \textbf{86.61\%} \\
\bottomrule
\end{tabular}
}
\end{minipage}
\hspace{0.02\linewidth}
\begin{minipage}[t]{0.44\linewidth}
\centering
\caption{Extension to PEFT variants.}
\label{tab:peft-extension}
\resizebox{\linewidth}{!}{
\begin{tabular}{lccc}
\toprule
\textbf{Methods} & QLoRA & AdaLoRA & DoRA \\
\midrule
\texttt{FlexLoRA} & 85.66\% & 87.34\% & 73.88\% \\
\texttt{raFLoRA}  & \textbf{86.53\%} & \textbf{88.73\%} & \textbf{86.63\%} \\
\bottomrule
\end{tabular}
}
\end{minipage}
\vspace{-0.15in}
\end{table}

\subsection{Extended Experiments}
\label{ext-exp}

\textbf{Extension to noisy low-rank clients.}
To evaluate the case where low-rank clients (rank=8) also have lower-quality data sources, we inject zero-mean Gaussian noise $\epsilon \sim \mathcal{N}(0,\nu^2)$ into their data on CIFAR100, where $\nu$ denotes the noise standard deviation. As shown in Table~\ref{tab:noise-low-rank}, \texttt{raFLoRA} consistently outperforms \texttt{FlexLoRA} as the noise level increases, suggesting robustness to noisy low-rank clients.

\textbf{Extension to LoRA variants.}
We extend \texttt{raFLoRA} to QLoRA~\cite{dettmers2023qlora}, AdaLoRA~\cite{zhang2023adalora}, and DoRA~\cite{liu2024dora}.
Table~\ref{tab:peft-extension} shows accuracy gains across variants on CIFAR100.
The degradation of \texttt{FlexLoRA}-DoRA suggests that DoRA is sensitive to rank collapse, since magnitude reweighting cannot recover attenuated directional information, whereas \texttt{raFLoRA} avoids this issue by rank-partitioned aggregation.

%% file: Sections/appendix.tex
\section{Proof of Rank Collapse in Heterogeneous FedLoRA} 
\label{appendix:proof-erc}

\begin{proof}
We analyze the dynamics of vanilla FedLoRA with heterogeneous ranks in the fixed singular basis $\{u_i v_i^\top\}_{i=1}^{r_{\max}}$ specified by Assumption~\ref{assump:fixed_basis}.

\paragraph{Step 1: One-step energy recursion.}
By the \texttt{FedAvg}~\cite{mcmahan2017communication} aggregation rule and Assumption~\ref{assump:client_updates}, the singular value of the global update in direction $i$ at round $t+1$, denoted $\sigma_i^{(t+1)}$, is given by the average of the contributions from the $M$ selected clients.
Let $\mathcal{M}_t$ be the set of clients selected in round $t$. For a fixed direction $i$, define
\begin{equation}
N_i^{(t)} = \sum_{k\in\mathcal{M}_t} \mathbf{1}\{r_k \ge i\},
\end{equation}
where $\mathbf{1}\{\cdot\}$ is the indicator function (equal to $1$ if its argument is true and $0$ otherwise). Thus $N_i^{(t)}$ counts how many sampled
clients support direction~$i$ in round~$t$.

By Assumption~\ref{assump:client_updates}, a client $k$ with $r_k \ge i$ contributes $\beta\,\sigma_i^{(t)}$ in direction $i$, while a client with
$r_k < i$ contributes $0$. Therefore,
\begin{equation}
\sigma_i^{(t+1)}
= \frac{1}{M}\sum_{k\in\mathcal{M}_t} \mathbf{1}\{r_k \ge i\}\,\beta\sigma_i^{(t)}
= \beta \frac{N_i^{(t)}}{M}\,\sigma_i^{(t)}.
\end{equation}
The energy in direction $i$ at round $t+1$ is $e_i^{(t+1)} = (\sigma_i^{(t+1)})^2$, so we obtain
\begin{equation}
\label{eq:energy_recursion_random}
e_i^{(t+1)}
= (\sigma_i^{(t+1)})^2
= \beta^{2}\left(\frac{N_i^{(t)}}{M}\right)^2 e_i^{(t)}.
\end{equation}

\paragraph{Step 2: Expected contraction factor.}
We now take expectation of~\eqref{eq:energy_recursion_random} conditional on the current state $e_i^{(t)}$. We assume that client sampling is
independent of the current global state, so $N_i^{(t)}$ is independent of $e_i^{(t)}$ and the conditional expectation only averages over the sampling
randomness. Since client sampling is uniform without replacement, $N_i^{(t)}$ follows a hypergeometric distribution
\[
N_i^{(t)} \sim \mathrm{Hypergeo}(K, Kp_i, M),
\]
where $K$ is the total number of clients and $Kp_i$ is the number of clients that support direction $i$ by ~\eqref{eq:rank_coverage}.
This distribution has mean and variance
\[
\mathbb{E}[N_i^{(t)}] = M p_i, \qquad
\mathrm{Var}(N_i^{(t)}) = M p_i(1-p_i)\frac{K-M}{K-1}.
\]
Hence the second moment is
\begin{equation}
\label{second-moment-N}
\mathbb{E}\big[(N_i^{(t)})^2\big]
= \mathrm{Var}(N_i^{(t)}) + \big(\mathbb{E}[N_i^{(t)}]\big)^2
= M p_i(1-p_i)\frac{K-M}{K-1} + M^2 p_i^2.
\end{equation}

Substituting this into~\eqref{eq:energy_recursion_random}, we define the expected contraction factor $q_i$ for direction $i$
\begin{equation}
\mathbb{E}[e_i^{(t+1)} \mid e_i^{(t)}]
= \beta^2 \frac{1}{M^2}\,\mathbb{E}\big[(N_i^{(t)})^2\big]\,e_i^{(t)}
   = q_i\,e_i^{(t)},
\end{equation}
with
\begin{equation}
\label{eq:qi_definition}
q_i
= \beta^2\left(
p_i^2 + \frac{K-M}{M(K-1)}\,p_i(1-p_i)
\right).
\end{equation}

Using the tower property of expectation, $ e_i^{(t)}$ satisfies the linear recursion
\begin{equation}
\label{eq:energy_linear_dynamics}
 e_i^{(t)}
= q_i  e_i^{(t-1)}
= \cdots
=  e_i^{(0)} (q_i)^t.
\end{equation}

\paragraph{Step 3: Monotonicity of the contraction factors.}
Define
\[
h(p) = p^2 + \frac{K-M}{M(K-1)}\,p(1-p).
\]
Let
\[
\tau = \frac{K-M}{M(K-1)},
\]
so that
\[
h(p) = (1-\tau)p^2 + \tau p.
\]
Because $1 \le M < K$, we have $\tau > 0$. The derivative of $h$ is
\[
h'(p) = 2(1-\tau)p + \tau.
\]
Since $1 \le M < K$, we have $\tau \in (0,1]$, and thus $h'(p) > 0$ for all $p \in [0,1]$. Therefore, $h(p)$ is strictly increasing on $[0,1]$.

From~\eqref{eq:qi_definition} we have $q_i = \beta^2 h(p_i)$, so $q_i$ is strictly ordered according to the coverage $p_i$. By ~\eqref{eq:rank_coverage},
\[
p_1 = \cdots = p_{r_1} > p_{r_1+1} \ge \cdots \ge p_{r_{\max}},
\]
which directly implies the ordering of contraction factors
\[
q_1 = \cdots = q_{r_1} > q_{r_1+1} \ge \cdots \ge q_{r_{\max}}.
\]
In particular, we define
\begin{equation}
\label{eq:gamma_definition}
\gamma = \frac{q_{r_1+1}}{q_{r_1}} \in [0,1).
\end{equation}

\paragraph{Step 4: Geometric convergence of the expected energy ratio.}
We next study the geometric convergence of the expected energy ratio $\rho_{r_1}^{(t)}$, defined as $\rho_{r_1}^{(t)} = \frac{\sum_{i=1}^{r_1}  e_i^{(t)}}{\sum_{j=1}^{r_{\max}}  e_j^{(t)}}.$
Assume that the initial low-rank energy is nonzero, \textit{i.e.,} $\sum_{i=1}^{r_1}  e_i^{(0)} > 0$, so that the top-$r_1$ subspace carries nontrivial energy. We examine the quantity $1 -  \rho_{r_1}^{(t)}$, which represents the fraction of the total expected energy residing in the tail ranks ($j > r_1$)
\begin{equation}
    1 -  \rho_{r_1}^{(t)} 
    = 1 - \frac{\sum_{i=1}^{r_1} {e}_i^{(t)}}{\sum_{j=1}^{r_{\max}} {e}_j^{(t)}} 
    = \frac{\sum_{j=r_1+1}^{r_{\max}} {e}_j^{(t)}}{\sum_{i=1}^{r_1} {e}_i^{(t)} + \sum_{j=r_1+1}^{r_{\max}} {e}_j^{(t)}}.
\end{equation}
Since all energies are nonnegative, we upper bound this fraction by omitting the tail term in the denominator
\begin{equation}
    1 -  \rho_{r_1}^{(t)}
    \le \frac{\sum_{j=r_1+1}^{r_{\max}} {e}_j^{(t)}}{\sum_{i=1}^{r_1} {e}_i^{(t)}}.
\end{equation}
Substituting the dynamics~\eqref{eq:energy_linear_dynamics} and using $q_i = q_{r_1}$ for all $i \le r_1$, we obtain
\begin{equation}
    1 -  \rho_{r_1}^{(t)} 
    \le \frac{\sum_{j=r_1+1}^{r_{\max}} {e}_j^{(0)} (q_j)^t}{\sum_{i=1}^{r_1} {e}_i^{(0)} (q_i)^t} 
    \le \frac{\sum_{j=r_1+1}^{r_{\max}} {e}_j^{(0)} (q_j)^t}{(q_{r_1})^t \sum_{i=1}^{r_1} {e}_i^{(0)}}
    = \frac{\sum_{j=r_1+1}^{r_{\max}} {e}_j^{(0)} \left( \frac{q_j}{q_{r_1}} \right)^t}{\sum_{i=1}^{r_1} {e}_i^{(0)}}.
\end{equation}
By the ordering of $\{q_i\}$ established above, for all $j > r_1$ we have $q_j \le q_{r_1+1}$, and hence, using~\eqref{eq:gamma_definition},
\[
0 \le \frac{q_j}{q_{r_1}}
\le \frac{q_{r_1+1}}{q_{r_1}}
= \gamma.
\]
Therefore, for all $j > r_1$,
\[
\left( \frac{q_j}{q_{r_1}} \right)^t \le \gamma^t,
\]
and we obtain
\begin{equation}
\label{eq:energy_ratio_geometric}
    1 -  \rho_{r_1}^{(t)} 
    \le \left( \frac{\sum_{j=r_1+1}^{r_{\max}} {e}_j^{(0)}}{\sum_{i=1}^{r_1} {e}_i^{(0)}} \right) \gamma^t 
    = C \gamma^t.
\end{equation}
By ~\eqref{eq:rank_coverage}, $p_{r_1+1} < p_{r_1}$. Since $q_i = \beta^2 h(p_i)$ and $h(\cdot)$ is strictly increasing, this implies $q_{r_1+1} < q_{r_1}$, and hence $0 \le \gamma < 1$ by \eqref{eq:gamma_definition}.
Therefore, as $t \to \infty$, we have $\gamma^t \to 0$, and consequently $\lim_{t \to \infty}  \rho_{r_1}^{(t)} = 1$.
\end{proof}


\section{A Mean-Field Analysis under General Non-IID Settings}
\label{appendix:extension-analysis}

We relax Assumptions~\ref{assump:fixed_basis}--\ref{assump:client_updates} and study
rank-wise energy dynamics under general non-IID settings. Our analysis is a \emph{mean-field heuristic}. We derive a tractable second-moment recursion by (i) modeling rank-dependent participation through a sampling random variable, (ii) capturing basis drift via an alignment factor, and (iii) absorbing cross-direction mixing into a bounded residual.
The objective is to show that data heterogeneity acts as a bounded perturbation and does not remove the dominant \emph{rank-wise averaging mismatch} induced by uniform aggregation weights, which provides a mechanism for the observed rank-collapse tendency.

We define $c_{k,i}^{(t)}=\langle u_i^{(t)}, \Delta W_k^{(t)} v_i^{(t)} \rangle$ as the contribution of client $k$ along the $i$-th global direction at round $t$, and $c_i^{(t+1)}=\langle u_i^{(t)}, \Delta W_g^{(t+1)} v_i^{(t)} \rangle$ as the corresponding aggregated coefficient projected onto the current global direction.
By linearity of \texttt{FedAvg},
\[
c_i^{(t+1)} = \frac{1}{M} \sum_{k \in \mathcal{M}_t} c_{k,i}^{(t)} .
\]

To account for cross-round basis evolution, we introduce the alignment factor
\begin{equation}
\label{eq:basis-drift}
\kappa_i^{(t)} =
\langle u_i^{(t+1)}, u_i^{(t)} \rangle
\langle v_i^{(t+1)}, v_i^{(t)} \rangle ,
\qquad |\kappa_i^{(t)}| \le 1 ,
\end{equation}
which measures how well the $i$-th singular direction is preserved across consecutive rounds.

Let $N_i^{(t)}$ denote the number of participating clients whose local rank supports direction $i$ (Appendix~\ref{appendix:proof-erc}), so that $\mathbb{E}[N_i^{(t)}/M] = p_i$. To explicitly connect finite-sample aggregation with the mean-field formulation, we first consider the following decomposition of the coefficient update
\begin{equation}
\label{eq:ci_decomp_N}
\zeta_i^{(t)} =
c_i^{(t+1)} -
\kappa_i^{(t)}\, \beta_i^{(t)}\,
\frac{N_i^{(t)}}{M}\, c_i^{(t)} ,
\end{equation}
where \(\zeta_i^{(t)}\) collects cross-direction mixing and other departures from the multiplicative model, and \(\beta_i^{(t)}\) denotes the effective local update strength along direction \(i\) at round \(t\).

Taking conditional expectation of~\eqref{eq:ci_decomp_N} and using $\mathbb{E}[N_i^{(t)}/M]=p_i$ yields the mean-field coefficient evolution. With rank-dependent participation, basis drift, and cross-direction mixing, the conditional expectation satisfies
\begin{equation}
\label{eq:coeff-extended}
\mathbb{E}[c_i^{(t+1)} \mid c_i^{(t)}]
=
\kappa_i^{(t)}\, p_i\, \beta_i^{(t)}\, c_i^{(t)}
+ \mathbb{E}[\zeta_i^{(t)} \mid c_i^{(t)}],
\end{equation}
where $\beta_i^{(t)}=\mathbb{E}[\beta_{k,i}^{(t)}]$ denotes the average effective local update strength.

Let $e_i^{(t)} = (c_i^{(t)})^2$ denote the rank-wise energy and define
\[
X_i^{(t)} =
\kappa_i^{(t)}\, \beta_i^{(t)}\, \frac{N_i^{(t)}}{M}\, c_i^{(t)}.
\]
Then $c_i^{(t+1)} = X_i^{(t)} + \zeta_i^{(t)} $, and conditioning on $c_i^{(t)}$ yields
\[
\mathbb{E}\!\left[(c_i^{(t+1)})^2 \mid c_i^{(t)}\right]
=
\mathbb{E}\!\left[(X_i^{(t)})^2 \mid c_i^{(t)}\right]
+ 2\,\mathbb{E}\!\left[X_i^{(t)} \zeta_i^{(t)} \mid c_i^{(t)}\right]
+ \mathbb{E}\!\left[(\zeta_i^{(t)})^2 \mid c_i^{(t)}\right].
\]
By Young's inequality applied pointwise and then taking conditional expectation, for any $\lambda>0$,
\[
2\,\mathbb{E}\!\left[X_i^{(t)} \zeta_i^{(t)} \mid c_i^{(t)}\right]
\le
\lambda\, \mathbb{E}\!\left[(X_i^{(t)})^2 \mid c_i^{(t)}\right]
+ \lambda^{-1}\, \mathbb{E}\!\left[(\zeta_i^{(t)})^2 \mid c_i^{(t)}\right].
\]
Taking total expectation gives
\begin{equation}
\label{eq:second_moment_bound_N}
\mathbb{E}[e_i^{(t+1)}]
\le
(1+\lambda)\, \mathbb{E}\!\left[(X_i^{(t)})^2\right]
+ (1+\lambda^{-1})\, \mathbb{E}\!\left[(\zeta_i^{(t)})^2\right].
\end{equation}

Moreover,
\[
(X_i^{(t)})^2
=
(\kappa_i^{(t)})^2 (\beta_i^{(t)})^2
\left(\frac{N_i^{(t)}}{M}\right)^2 e_i^{(t)} .
\]

We approximate the second moment by decoupling 
\((\kappa_i^{(t)}, \beta_i^{(t)}, N_i^{(t)})\) from \(e_i^{(t)}\) and from each other
at the level of second moments, yielding the following mean-field approximation:
\begin{equation}
\label{eq:mf_decouple}
\mathbb{E}\!\left[
(\kappa_i^{(t)})^2 (\beta_i^{(t)})^2
\left(\frac{N_i^{(t)}}{M}\right)^2 e_i^{(t)}
\right]
\approx
\mathbb{E}\!\left[(\kappa_i^{(t)})^2 (\beta_i^{(t)})^2\right]\,
\mathbb{E}\!\left[\left(\frac{N_i^{(t)}}{M}\right)^2\right]\,
\mathbb{E}[e_i^{(t)}].
\end{equation}
By the hypergeometric second-moment identity in Appendix~\ref{appendix:proof-erc}, 
\(\mathbb{E}[(N_i^{(t)}/M)^2] = h(p_i)\). Assuming the residual has uniformly bounded second moment, \textit{i.e.,} there exists \(\delta_i^2\) such that
\[
(1+\lambda^{-1})\, \mathbb{E}\!\left[(\zeta_i^{(t)})^2\right] \le \delta_i^2,
\qquad \forall t ,
\]
substituting into \eqref{eq:second_moment_bound_N} yields the mean-field recurrence
\begin{equation}
\label{eq:second_moment_recursion_general}
\mathbb{E}[e_i^{(t+1)}]
\approx
q_i'\, \mathbb{E}[e_i^{(t)}]
+ \delta_i^2 ,
\end{equation}
where
\[
q_i' =
(1+\lambda)\, h(p_i)\,
\mathbb{E}\!\left[(\kappa_i^{(t)})^2 (\beta_i^{(t)})^2\right].
\]

Under this mean-field approximation, the rank-wise averaging mismatch enters the effective contraction factor through 
\(\mathbb{E}[(N_i^{(t)}/M)^2] = h(p_i)\), while basis drift and local update strength appear as multiplicative modifiers via 
\(\mathbb{E}[(\kappa_i^{(t)})^2 (\beta_i^{(t)})^2]\). All remaining non-ideal effects are captured by the additive residual term \(\delta_i^2\). Consequently, the mean-field dynamics suggest a relative suppression of sparsely covered higher-rank directions \((i>r_1)\) compared with shared low-rank directions \((i\le r_1)\). This provides a qualitative explanation for why the rank-collapse tendency may persist under general non-IID settings, although it should not be interpreted as a formal monotonicity proof of the energy ratio \(\rho_{r_1}^{(t)}\). When \(\delta_i^2\) is negligible, the predicted dynamics are consistent with the basic analysis. When \(\delta_i^2 > 0\), higher-rank energies are expected to remain near steady-state floors of order
\(\delta_i^2 / (1 - q_i')\), preserving the qualitative behavior predicted by the basic setting.

\section{Additional Experiments}
\label{appendix:extended-exp}
To further examine the impact of rank heterogeneity and LoRA module insertion, we conduct extended experiments that vary both rank configurations and LoRA insertions.

\begin{figure}[ht]
  \centering
  \begin{subfigure}{0.49\linewidth}
    \centering
    \includegraphics[width=\columnwidth]{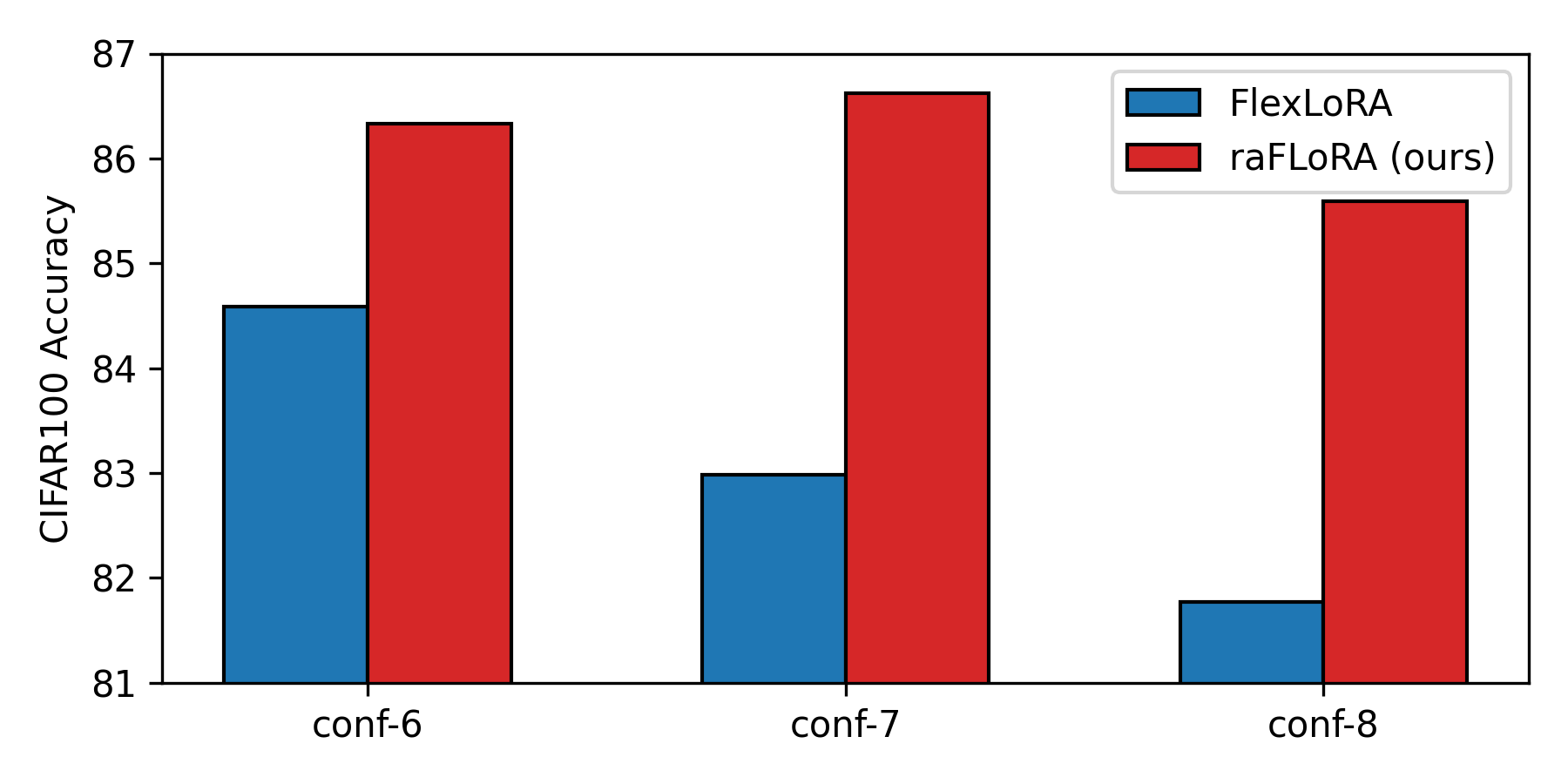}
    \caption{Additional rank configurations.}
    \label{fig:extended-ranks}
  \end{subfigure}
  \begin{subfigure}{0.49\linewidth}
    \centering
    \includegraphics[width=\columnwidth]{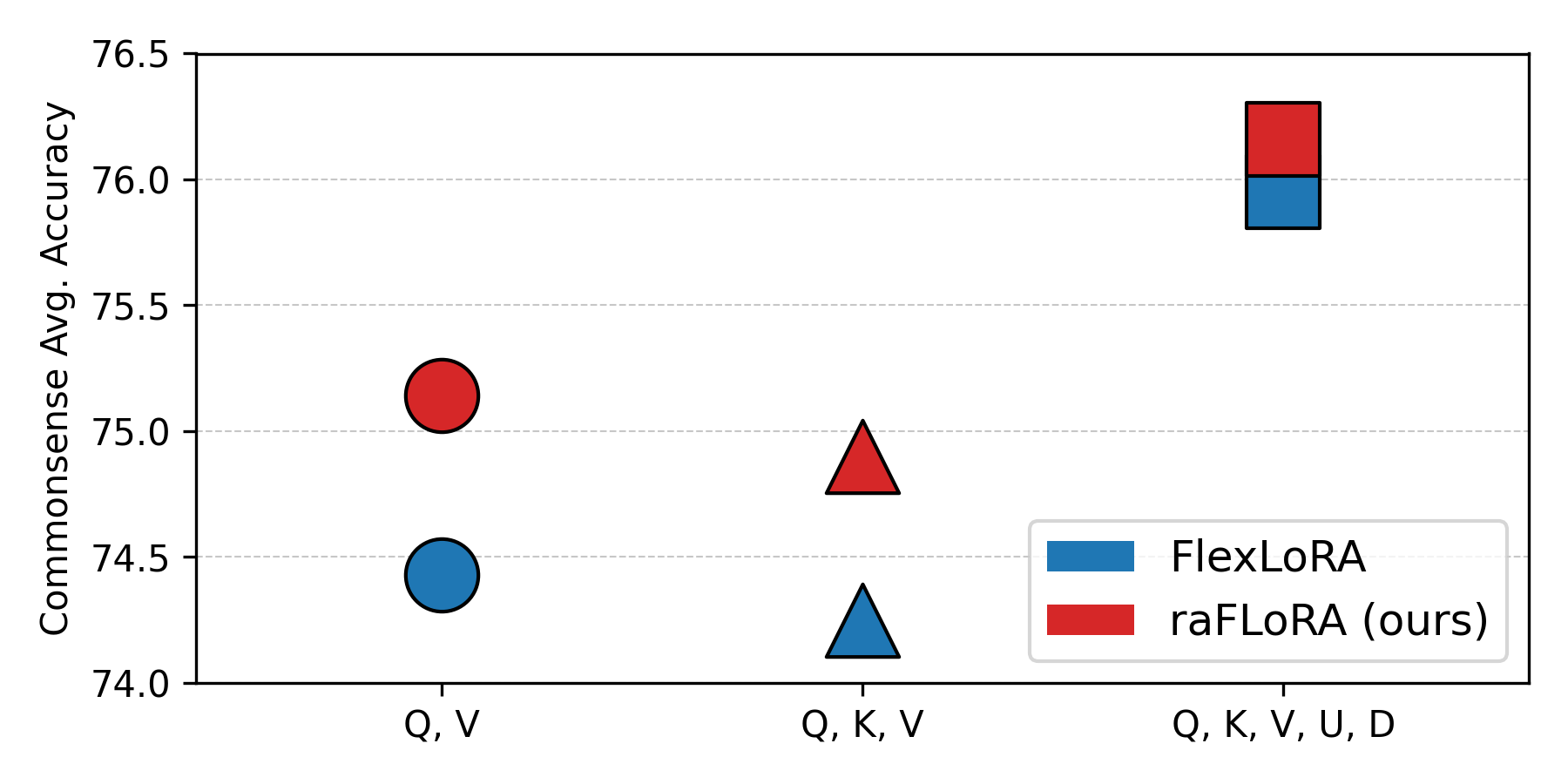}
    \caption{Different LoRA insertion modules.}
    \label{fig:extended-modules}
  \end{subfigure}
  \caption{Extended experiments comparing \texttt{raFLoRA} and \texttt{FlexLoRA} under varying rank configurations and LoRA module insertion settings.}
  \label{fig:extended-exp}
\end{figure}

\paragraph{Effect of additional rank configurations.}
We evaluate a broader range of rank configurations, including conf-6 $\{8,12,16,20,24\}$, conf-7 $\{4,8,16,32,64\}$, and conf-8 $\{1,4,16,64,256\}$, which exhibit progressively larger rank gaps.
As shown in Figure~\ref{fig:extended-ranks}, \texttt{raFLoRA} consistently achieves larger accuracy improvements over \texttt{FlexLoRA} as rank heterogeneity increases, with gains of up to 4\%.
This trend reflects the strong dependence of \texttt{FlexLoRA} on the minimum client rank, which increasingly constrains the effective rank of the global update under larger rank gaps.
In contrast, \texttt{raFLoRA} mitigates rank collapse through rank-partitioned aggregation, enabling higher-rank clients to contribute more effectively and thereby better exploiting larger rank configurations.

\paragraph{Effect of different LoRA module insertion settings.}
We further evaluate the robustness of \texttt{raFLoRA} under various LoRA module insertion settings using LLaMA-3.2-3B, including $\{Q,V\}$, $\{Q,K,V\}$, and $\{Q,K,V,U,D\}$, where $Q$, $K$, and $V$ denote the query, key, and value projections in attention layers, and $U$ and $D$ correspond to the MLP up- and down-projection layers, respectively.
These configurations progressively increase the number of adapted modules.
As shown in Figure~\ref{fig:extended-modules}, \texttt{raFLoRA} consistently achieves higher average accuracy across all commonsense reasoning benchmarks under these settings.
These results indicate that \texttt{raFLoRA} scales well with the number of inserted LoRA modules and remains robust across diverse module configurations, highlighting its suitability for flexible and scalable adaptation of large models.


\section{Training Dynamics of Accuracy and Loss}
\label{app:training-dynamics}

We track the global evaluation accuracy and global training loss during fine-tuning.
As illustrated in Figure~\ref{fig:acc-loss}, \texttt{raFLoRA} consistently achieves higher global accuracy and lower global loss across training rounds on both ViT-base and LLaMA-3.2-3B.
Notably, \texttt{FLoRA} re-initializes its LoRA parameters at each round of local training.
Although global updates are merged into the base model, the low-rank adaptation does not persist across rounds, causing the optimization trajectory in the low-rank subspace to be repeatedly reset.
This may slow convergence and degrade performance on more complex tasks.

\begin{figure}[ht]
  \centering
  \begin{subfigure}{0.49\linewidth}
    \centering
    \includegraphics[width=\linewidth]{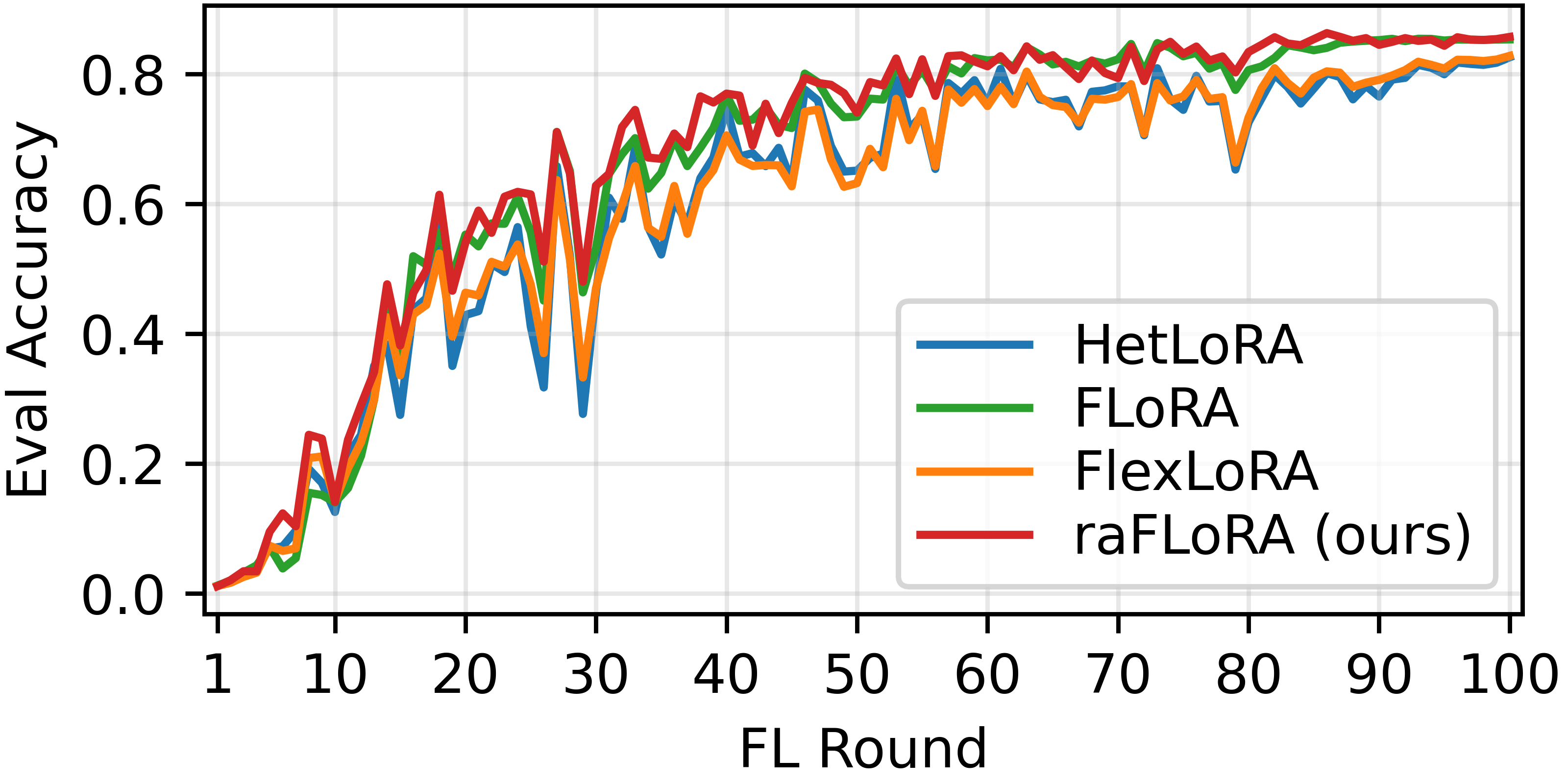}
    \caption{Accuracy (ViT-base)}
    \label{fig:vit-acc}
  \end{subfigure}
  \hfill
  \begin{subfigure}{0.49\linewidth}
    \centering
    \includegraphics[width=\linewidth]{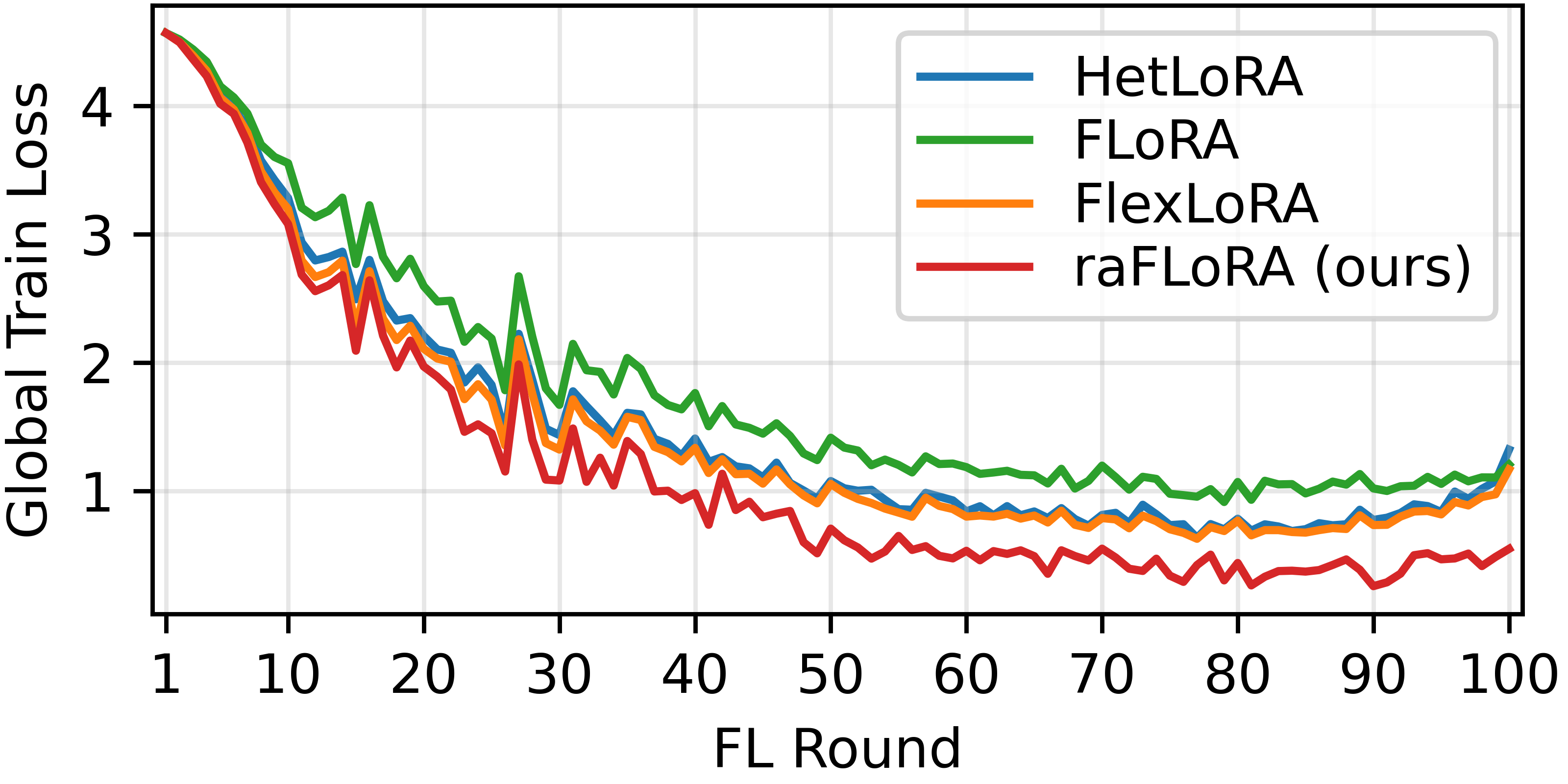}
    \caption{Loss (ViT-base)}
    \label{fig:vit-loss}
  \end{subfigure}

  \vspace{0.02in}

  \begin{subfigure}{0.49\linewidth}
    \centering
    \includegraphics[width=\linewidth]{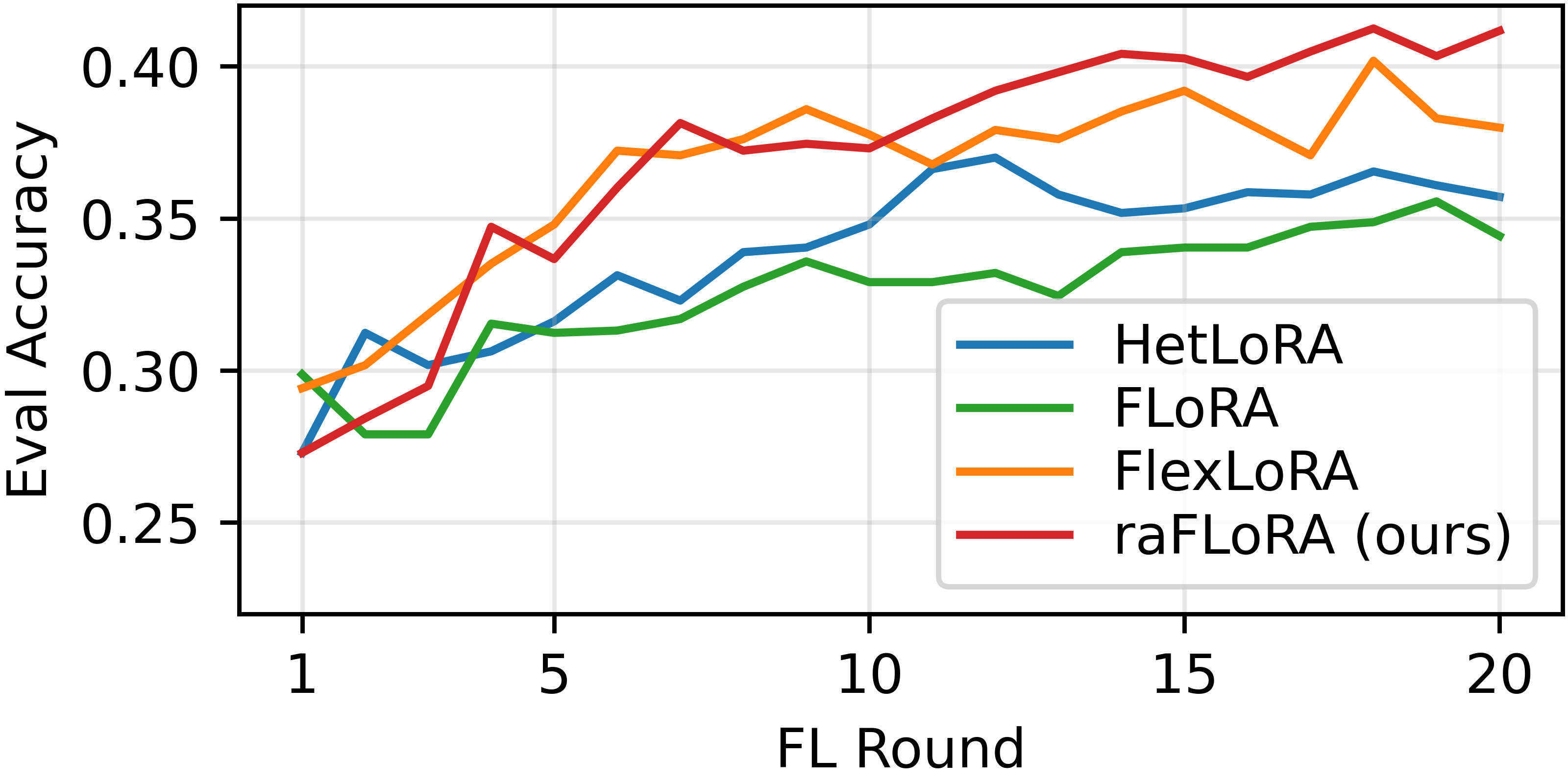}
    \caption{Accuracy (LLaMA-3.2-3B)}
    \label{fig:llama-acc}
  \end{subfigure}
  \hfill
  \begin{subfigure}{0.49\linewidth}
    \centering
    \includegraphics[width=\linewidth]{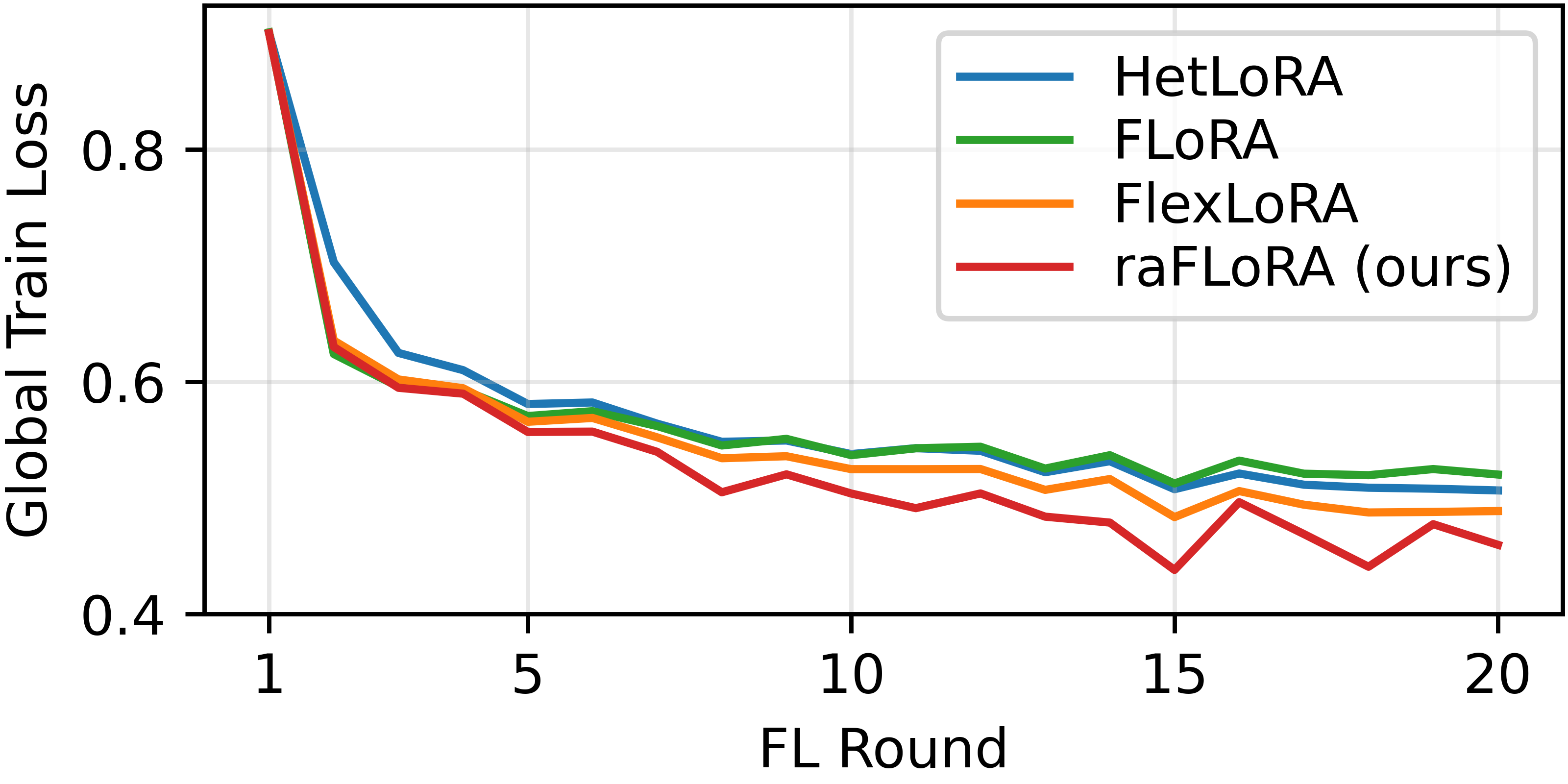}
    \caption{Loss (LLaMA-3.2-3B)}
    \label{fig:llama-loss}
  \end{subfigure}

  \caption{Global evaluation accuracy and global training loss over communication rounds. Results on CIFAR100 with ViT-base are shown in (a) and (b), while results on GSM8K with LLaMA-3.2-3B are shown in (c) and (d).}
  \label{fig:acc-loss}
\end{figure}



\section{Hyperparameter Settings for Main Experiments}
\label{appendix:hyperparameters}

\subsection{Image Classification for Vision}
\label{appendix:vision}
\paragraph{Datasets}
For image classification, we conduct experiments on CIFAR100~\cite{Krizhevsky2009cifar100}, which contains 60K color images of size $32 \times 32$ from 100 classes. Each class includes 500 training samples and 100 test samples, resulting in 50K training images and 10K test images in total.

\paragraph{Hyperparameters}
For image classification, we follow standard federated learning configurations.
Unless otherwise specified, all experiments adopt identical optimization and communication settings, with the complete hyperparameter configuration summarized in Table~\ref{tab:hyper-cifar100}.
Given its relatively large label space, CIFAR100 supports flexible non-IID data partitioning, and we therefore fix its data partition to the pathological non-IID setting c20($\alpha=1$) in the main experiments.

For the experiments in Section~\ref{sec:intro}, Figures~\ref{fig:energy-flexlora} and~\ref{fig:energy-raflora} use the same configuration as Table~\ref{tab:hyper-cifar100}. Figure~\ref{fig:acc-niid} changes only the data partitioning scheme to the regular non-IID setting, with the LoRA rank selected from $\{1,16,32,48,64\}$. Figure~\ref{fig:acc-config} changes only the minimum rank $r_1$, considering three rank sets: $\{1,16,32,48,64\}$ for $r_1=1$, $\{4,16,32,48,64\}$ for $r_1=4$, and $\{8,16,32,48,64\}$ for $r_1=8$.

\begin{table}[ht]
\caption{Hyperparameter settings for image classification experiments on CIFAR100.}
\label{tab:hyper-cifar100}
\centering
\small
\begin{tabular}{lc}
\toprule
\textbf{Hyperparameters} & \textbf{Values} \\
\midrule
Number of Clients                       & 100 \\
Number of Rounds                        & 100 \\
Client Participation Ratio per Round    & 10\% \\
Data Partitioning                       & c20($\alpha=1$) \\
Local Training Epoch                    & 1 \\
Batch Size                              & 32 \\
Optimizer                               & AdamW \\
Learning Rate                           & 5e-4 \\
Learning Rate Scheduler                 & Linear decay per round \\
Inserted Modules of LoRA                & All linear layers \\
LoRA Rank Configurations                & {8,16,32,48,64} \\
Rank Probability Distributions          & {0.2,0.2,0.2,0.2,0.2} \\
\bottomrule
\end{tabular}
\end{table}

\subsection{Text Classification for Language}
\label{appendix:language}
\paragraph{Datasets}
For text classification, we evaluate our method on the 20 Newsgroups~\cite{lang199520ng} dataset, a topic classification benchmark consisting of newsgroup posts spanning 20 distinct categories. 
The dataset contains approximately 11.3K training samples and 7.5K test samples.

\begin{table}[ht]
\caption{Hyperparameter settings for text classification experiments on 20NG.}
\label{tab:hyper-tc}
\centering
\small
\begin{tabular}{lc}
\toprule
\textbf{Hyperparameters} & \textbf{Values} \\
\midrule
Number of Clients                       & 100 \\
Number of Rounds                        & 100 \\
Client Participation Ratio per Round    & 10\% \\
Data Partitioning                       & c5($\alpha=1$) \\
Local Training Epoch                    & 1 \\
Batch Size                              & 32 \\
Optimizer                               & AdamW \\
Learning Rate                           & 5e-4 \\
Learning Rate Scheduler                 & Linear decay per round \\
Inserted Modules of LoRA                & All linear layers \\
LoRA Rank Configurations                & {8,16,32,48,64} \\
Rank Probability Distributions          & {0.2,0.2,0.2,0.2,0.2} \\
\bottomrule
\end{tabular}
\end{table}

\paragraph{Hyperparameters}
For text classification, we similarly follow standard federated learning configurations.
Unless otherwise specified, all experiments adopt identical optimization and communication settings, with the complete hyperparameter configuration summarized in Table~\ref{tab:hyper-tc}.
Owing to its larger number of classes, the 20 Newsgroups dataset allows flexible non-IID partitioning, and we fix its data partition to the pathological non-IID setting c5($\alpha=1$) in the main experiments.

\subsection{Mathematical Reasoning}
\label{appendix:math}
\paragraph{Datasets}
For mathematical reasoning, we use the GSM8K~\cite{cobbe2021gsm8k} dataset, which contains approximately 8.5K high-quality, linguistically diverse grade-school math word problems, including 7.5K training samples and 1.3K test samples. The task focuses on question answering for basic mathematical problems that require multi-step reasoning, typically involving 2 to 8 solution steps. The problems require no concepts beyond early algebra and are primarily solved through sequences of elementary arithmetic operations (e.g., \(+\), \( - \), \( \times \), \( \div \)). Each solution is provided in natural language rather than symbolic expressions, making the dataset particularly suitable for evaluating step-by-step reasoning capabilities of language models.

\begin{table}[ht]
\caption{Hyperparameter settings for mathematical reasoning experiments on GSM8K.}
\label{tab:hyper-math}
\centering
\small
\begin{tabular}{lc}
\toprule
\textbf{Hyperparameters} & \textbf{Values} \\
\midrule
Number of Clients                       & 100 \\
Number of Rounds                        & 20 \\
Client Participation Ratio per Round    & 10\% \\
Data Partitioning                       & iid \\
Local Training Epoch                    & 1 \\
Batch Size                              & 4 \\
Optimizer                               & AdamW \\
Learning Rate                           & 5e-4 \\
Learning Rate Scheduler                 & Linear decay per round \\
Inserted Modules of LoRA                & Query and Value \\
LoRA Rank Configurations                & {8,16,32,48,64} \\
Rank Probability Distributions          & {0.2,0.2,0.2,0.2,0.2} \\
\bottomrule
\end{tabular}
\end{table}

\paragraph{Hyperparameters}
Following prior work on federated low-rank adaptation of large-scale models, such as Fed-SB~\cite{singhal2025fedsb} and FedEx-LoRA~\cite{singhal2025fedexlora}, we adopt similar training protocols with necessary modifications to accommodate heterogeneous LoRA rank settings. The detailed hyperparameter configurations used in our main experiments are summarized in Table~\ref{tab:hyper-math}. For the GSM8K dataset, we evenly partition the training data across all clients.

\subsection{Commonsense Reasoning}
\label{appendix:common}
\paragraph{Datasets}
For commonsense reasoning, we use the Commonsense15K~\cite{hu2023commonsense} benchmark, which comprises eight sub-tasks: BoolQ~\citep{clark2019boolq}, PIQA~\citep{bisk2020piqa}, SIQA~\citep{sap2019socialiqa}, HellaSwag~\citep{zellers2019hellaswag}, Winogrande~\citep{sakaguchi2021winogrande}, ARC-Easy and ARC-Challenge~\citep{clark2018think}, and OpenBookQA~\citep{mihaylov2018can}.  
Models are fine-tuned on Commonsense15K and evaluated separately on the test sets of each of the eight tasks.
Since these sub-tasks involve different discrete answer formats, such as True/False, Answer, Solution, Option, and Ending, we use the answer-format identifiers as categorical labels to construct non-IID client partitions.

\begin{table}[ht]
\caption{Hyperparameter settings for commonsense reasoning experiments on Commonsense15K.}
\label{tab:hyper-common}
\centering
\small
\begin{tabular}{lc}
\toprule
\textbf{Hyperparameters} & \textbf{Values} \\
\midrule
Number of Clients                       & 100 \\
Number of Rounds                        & 20 \\
Client Participation Ratio per Round    & 10\% \\
Data Partitioning                       & $\alpha=0.5$ \\
Local Training Epoch                    & 1 \\
Batch Size                              & 16 \\
Optimizer                               & AdamW \\
Learning Rate                           & 5e-4 \\
Learning Rate Scheduler                 & Linear decay per round \\
Inserted Modules of LoRA                & Query and Value \\
LoRA Rank Configurations                & {8,16,32,48,64} \\
Rank Probability Distributions          & {0.2,0.2,0.2,0.2,0.2} \\
\bottomrule
\end{tabular}
\end{table}

\paragraph{Hyperparameters}
Building on established protocols for federated low-rank adaptation of large-scale models~\cite{singhal2025fedexlora,singhal2025fedsb}, we adopt a consistent training setup with minimal adjustments to support heterogeneous LoRA rank settings.
The complete set of hyperparameters is reported in Table~\ref{tab:hyper-common}.
For Commonsense15K, we apply Dirichlet partitioning over the answer-format labels and fix the partition to the $\alpha=0.5$ non-IID setting in the main experiments.
This yields a controlled and reproducible label-skew non-IID split, where different clients are biased toward different answer-format categories and thus receive different mixtures of the underlying sub-tasks.

\section{Detailed Results on Commonsense Reasoning}
\label{appendix:commonsense-results}
For commonsense reasoning, we use Commonsense15K~\cite{hu2023commonsense}, a benchmark including eight datasets: BoolQ~\citep{clark2019boolq}, PIQA~\citep{bisk2020piqa}, SIQA~\citep{sap2019socialiqa}, HellaSwag~\citep{zellers2019hellaswag}, Winogrande~\citep{sakaguchi2021winogrande}, ARC-Easy and ARC-Challenge~\citep{clark2018think}, and OpenBookQA~\citep{mihaylov2018can}.
We fine-tune the global model on Commonsense15K in a federated setting using LLaMA-3.2-3B~\cite{meta2024llama3.2} and LLaMA-3.1-8B~\cite{grattafiori2024llama3}, and evaluate the fine-tuned global model on the test sets of all eight sub-tasks.

As shown in Tables~\ref{tab:main-common-3B} and~\ref{tab:main-common-8B}, \texttt{raFLoRA} achieves the highest average accuracy across the eight commonsense reasoning tasks.
The Avg. column is computed by first averaging the results over the eight sub-tasks for each random seed, and then reporting the mean and standard deviation across three random seeds.
This metric reflects the overall capability of the federated fine-tuned global model, as it summarizes generalization across diverse commonsense reasoning tasks rather than performance on a single sub-task. \texttt{raFLoRA} consistently outperforms \texttt{HetLoRA} and \texttt{FLoRA} on both model scales. Compared with the strong baseline \texttt{FlexLoRA}, \texttt{raFLoRA} achieves higher average accuracy while maintaining on-par or superior performance on all sub-tasks except OBQA.
These results demonstrate the benefit of rank-partitioned aggregation over existing heterogeneous-rank FedLoRA methods.

\begin{table}[ht]
\caption{Performance comparison on commonsense reasoning tasks using LLaMA-3.2-3B. }
\label{tab:main-common-3B}
\centering
\small
\renewcommand{\arraystretch}{1.0}
\resizebox{\textwidth}{!}{%
\begin{tabular}{lccccccccc}
\toprule
\textbf{Methods} & \textbf{BoolQ} & \textbf{PIQA} & \textbf{SIQA} & \textbf{HS} & \textbf{WG} & \textbf{ARC-e} & \textbf{ARC-c} & \textbf{OBQA} & \colorbox{lightorange}{\textbf{Avg.}} \\
\midrule
\texttt{HetLoRA}         & 63.78$_{\pm0.23}$ & 79.60$_{\pm1.04}$ & 67.59$_{\pm0.58}$ & 77.54$_{\pm0.71}$ & 60.09$_{\pm1.03}$ & 84.92$_{\pm0.24}$ & 70.36$_{\pm0.47}$ & 71.20$_{\pm1.39}$ & \colorbox{lightorange}{71.89$_{\pm0.41}$} \\
\texttt{FLoRA}           & 62.94$_{\pm0.56}$ & 79.61$_{\pm0.68}$ & 66.79$_{\pm0.58}$ & 74.85$_{\pm1.16}$ & 58.17$_{\pm0.21}$ & 84.33$_{\pm0.07}$ & 70.39$_{\pm0.44}$ & 71.07$_{\pm0.42}$ & \colorbox{lightorange}{71.02$_{\pm0.18}$} \\
\texttt{FlexLoRA}        & 62.44$_{\pm2.82}$ & 81.11$_{\pm0.49}$ & 70.13$_{\pm0.50}$ & 81.76$_{\pm0.84}$ & 64.30$_{\pm0.56}$ & 87.00$_{\pm0.34}$ & 72.55$_{\pm0.73}$ & 75.33$_{\pm1.10}$  & \colorbox{lightorange}{74.33$_{\pm0.50}$} \\
\texttt{raFLoRA} (ours)  & 64.98$_{\pm1.94}$ & 81.10$_{\pm0.35}$ & 70.79$_{\pm1.47}$ & 82.58$_{\pm1.16}$ & 65.40$_{\pm0.81}$ & 86.95$_{\pm0.59}$ & 72.27$_{\pm0.31}$ & 74.80$_{\pm1.25}$  & \colorbox{lightorange}{\textbf{74.86}$_{\pm0.39}$} \\
\bottomrule
\end{tabular}
}
\end{table}

\begin{table}[ht]
\caption{Performance comparison on commonsense reasoning tasks using LLaMA-3.1-8B. }
\label{tab:main-common-8B}
\centering
\small
\renewcommand{\arraystretch}{1.0}
\resizebox{\textwidth}{!}{%
\begin{tabular}{lccccccccc}
\toprule
\textbf{Methods} & \textbf{BoolQ} & \textbf{PIQA} & \textbf{SIQA} & \textbf{HS} & \textbf{WG} & \textbf{ARC-e} & \textbf{ARC-c} & \textbf{OBQA} & \colorbox{lightorange}{\textbf{Avg.}} \\
\midrule
\texttt{HetLoRA}         & 70.61$_{\pm0.17}$ & 84.80$_{\pm0.93}$ & 72.84$_{\pm1.54}$ & 86.58$_{\pm1.26}$ & 69.56$_{\pm1.60}$ & 91.63$_{\pm0.51}$ & 78.78$_{\pm0.44}$ & 80.80$_{\pm1.73}$ & \colorbox{lightorange}{79.45$_{\pm1.00}$} \\
\texttt{FLoRA}           & 69.67$_{\pm1.78}$ & 85.29$_{\pm0.33}$ & 72.96$_{\pm0.31}$ & 86.36$_{\pm0.57}$ & 66.72$_{\pm0.77}$ & 91.65$_{\pm0.16}$ & 79.15$_{\pm0.30}$ & 80.27$_{\pm0.90}$ & \colorbox{lightorange}{79.01$_{\pm0.38}$} \\
\texttt{FlexLoRA}        & 70.76$_{\pm0.89}$ & 85.76$_{\pm1.28}$ & 73.70$_{\pm1.47}$ & 88.99$_{\pm0.57}$ & 73.56$_{\pm0.55}$ & 92.06$_{\pm0.50}$ & 79.49$_{\pm0.70}$ & 82.73$_{\pm1.53}$ & \colorbox{lightorange}{80.88$_{\pm0.67}$} \\
\texttt{raFLoRA} (ours)  & 70.88$_{\pm1.15}$ & 85.38$_{\pm1.39}$ & 74.65$_{\pm0.73}$ & 89.06$_{\pm0.31}$ & 74.74$_{\pm0.48}$ & 91.96$_{\pm0.34}$ & 80.66$_{\pm0.13}$ & 81.87$_{\pm1.10}$ & \colorbox{lightorange}{\textbf{81.15}$_{\pm0.08}$} \\
\bottomrule
\end{tabular}
}
\end{table}